\newtheorem{proposition}{Proposition}
\newtheorem{corollary}{Corollary}
\newtheorem{theorem}{Theorem}
\newcommand{\svs}[1]{}
\newcommand{\vs}[1]{}
\newcommand{\otherf}{f}
\newcommand{\calP}{P}
\icmltitlerunning{Deep Generative Stochastic Networks Trainable by Backprop}
\begin{document} 

\twocolumn[
\icmltitle{Deep Generative Stochastic Networks Trainable by Backprop}



\icmlauthor{Yoshua Bengio$^*$}{find.us@on.the.web}
\icmlauthor{\'Eric Thibodeau-Laufer}{}
\icmlauthor{Guillaume Alain}{}
\icmladdress{D\'epartement d'informatique et recherche op\'erationnelle, Universit\'e de Montr\'eal,$^*$\& Canadian Inst. for Advanced Research}
\icmlauthor{Jason Yosinski}{}
\icmladdress{Department of Computer Science, Cornell University}

\icmlkeywords{deep learning, unsupervised learning, generative models, denoising autoencoder}

\vskip 0.3in
]

\begin{abstract}
We introduce a novel training principle for probabilistic models that is an alternative to maximum likelihood. The proposed Generative Stochastic Networks (GSN) framework is based on learning the transition operator of a Markov chain whose stationary distribution estimates the data distribution.  The transition distribution of the Markov chain is conditional on the previous state, generally involving a small move, so this conditional distribution has fewer dominant modes, being unimodal in the limit of small moves. Thus, it is easier to learn because it is easier to approximate its partition function, more like learning to perform supervised function approximation, with gradients that can be obtained by backprop. We provide theorems that generalize recent work on the probabilistic interpretation of denoising autoencoders and obtain along the way an interesting justification for dependency networks and generalized pseudolikelihood, along with a definition of an appropriate joint distribution and sampling mechanism even when the conditionals are not consistent. GSNs can be used with missing inputs and can be used to sample subsets of variables given the rest.  We validate these theoretical results with experiments on two image datasets using an architecture that mimics the Deep Boltzmann Machine Gibbs sampler but allows training to proceed with simple backprop, without the need for layerwise pretraining.
\end{abstract}

\begin{figure}[ht!]
\vs{3}
\centering
\includegraphics[width=1\linewidth]{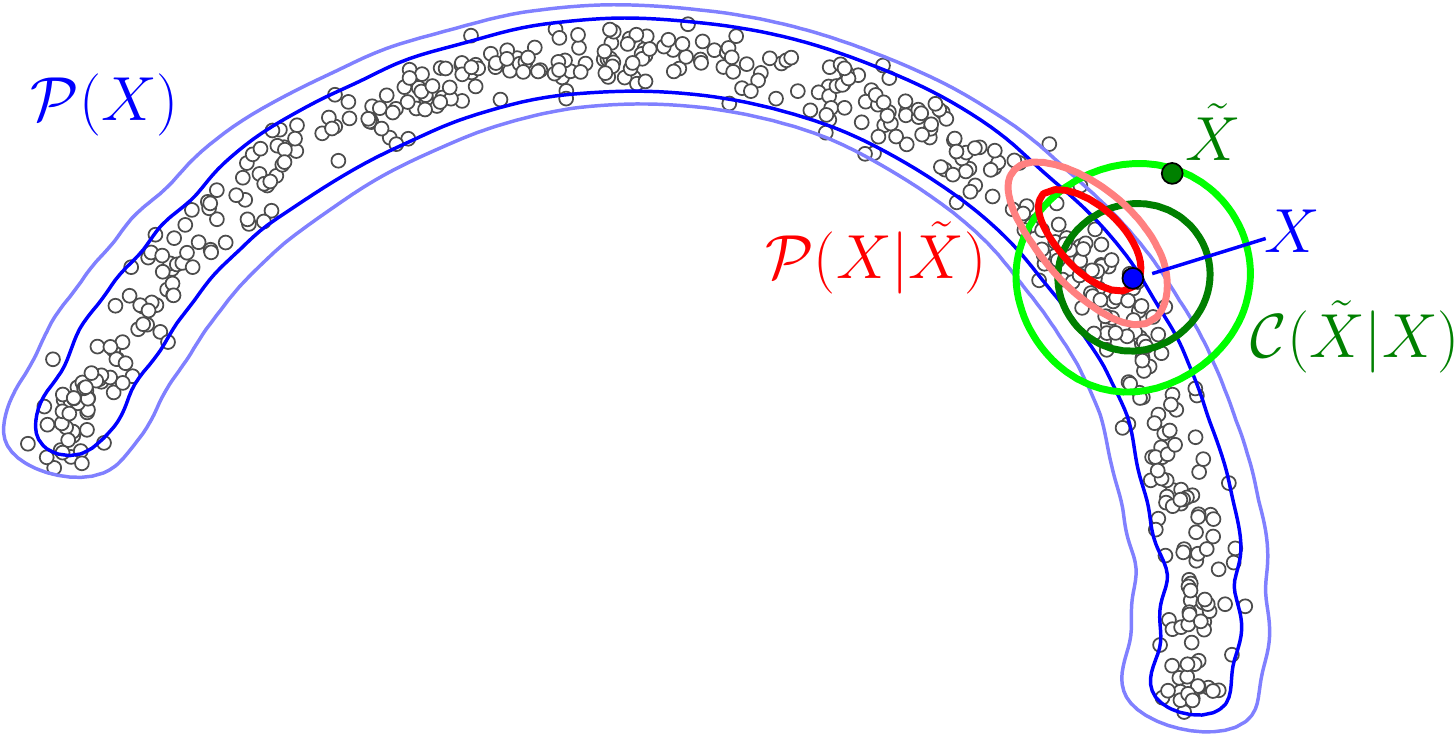}
\includegraphics[width=1\linewidth]{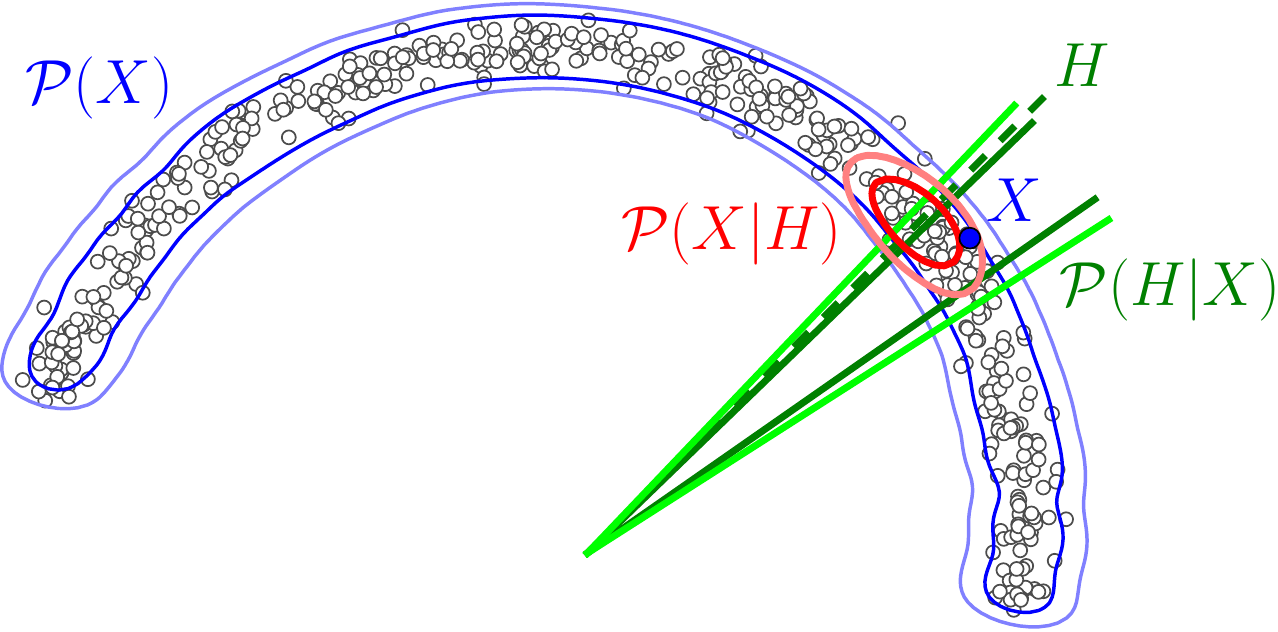}
\vs{3}
\caption{
{\em Top:} A denoising auto-encoder defines an estimated Markov chain where the transition 
operator first samples a corrupted $\tilde{X}$ from ${\cal C}(\tilde{X}|X)$ and then samples a reconstruction from $P_\theta(X|\tilde{X})$, which is trained to estimate the ground truth 
${\calP}(X|\tilde{X})$. Note how for any given $\tilde{X}$,
${\calP}(X|\tilde{X})$ is a much simpler (roughly unimodal) distribution than the
ground truth ${\calP}(X)$ and its partition function is thus easier to approximate.
{\em Bottom:} More generally, a GSN allows the use of arbitrary 
latent variables $H$ in addition to $X$, with the Markov chain state 
(and mixing) involving both $X$ and $H$. Here $H$ is the angle about the origin.
The GSN inherits the benefit of a simpler conditional and adds 
latent variables, which allow far more powerful deep representations in which mixing is easier~\citep{Bengio-et-al-ICML2013}.
}
\label{fig:data_px}
\vs{3}
\end{figure}

\svs{3}
\section{Introduction}
\svs{2}

Research in deep learning~(see \citet{Bengio-2009-book}
and~\citet{Bengio-Courville-Vincent-TPAMI2013} for reviews) grew from
breakthroughs in unsupervised learning of representations, based
mostly on the Restricted Boltzmann Machine (RBM)~\citep{Hinton06},
auto-encoder
variants~\citep{Bengio-nips-2006-small,VincentPLarochelleH2008-small}, and
sparse coding variants~\citep{HonglakLee-2007,ranzato-07-small}.
However, the most impressive recent results have been obtained with purely
supervised learning techniques for deep networks, in particular for speech
recognition~{\citep{dahl2010phonerec-small,Deng-2010,Seide2011}
and object recognition~\citep{Krizhevsky-2012-small}.
The latest breakthrough in object
recognition~\citep{Krizhevsky-2012-small} was achieved with fairly deep
convolutional networks with a form of noise injection in the input and hidden
layers during training,
called dropout~\citep{Hinton-et-al-arxiv2012}.
In all of these cases, the availability of large quantities of labeled data was critical.

On the other hand, progress with deep unsupervised architectures has been
slower, with the established options with a probabilistic footing
being the Deep Belief Network (DBN)~\citep{Hinton06} and the Deep Boltzmann
Machine (DBM)~\citep{Salakhutdinov+Hinton-2009-small}. 
Although single-layer unsupervised learners are fairly well
developed and used to pre-train these deep models, 
jointly training all the layers with respect to a single
unsupervised criterion remains a challenge, with a few
techniques arising to reduce that difficulty~\citep{Montavon2012,Goodfellow-et-al-NIPS2013}.
In contrast to recent progress toward joint supervised training of models with many layers, joint 
unsupervised training of deep models remains a difficult task.

Though the goal of training large unsupervised networks has turned out to
be more elusive than its supervised counterpart, the vastly larger
available volume of unlabeled data still beckons for efficient methods to
model it.  Recent progress in training supervised models raises the question:
can we take advantage of this progress to improve our ability to train
deep, generative, unsupervised, semi-supervised or structured output models?

This paper lays theoretical foundations
for a move in this direction through the following main contributions:

{\bf  1 -- Intuition: } In Section~\ref{sec:unsup_hard} we discuss what we view as
basic motivation for studying alternate ways of training unsupervised probabilistic models, i.e., avoiding
the intractable sums or maximization involved in many approaches.

{\bf   2 -- Training Framework: } We generalize recent work on the generative
view of denoising autoencoders~\cite{Bengio-et-al-NIPS2013} by introducing
latent variables in the framework to define Generative Stochastic Networks
(GSNs) (Section~\ref{sec:gsn}). GSNs aim to estimate the data generating
distribution indirectly, by parametrizing the transition operator of a
Markov chain rather than directly parametrizing $P(X)$.
Most critically, {\em this framework transforms the unsupervised density estimation
problem into one which is more similar to supervised function approximation}.
This enables training by (possibly regularized) maximum likelihood
and gradient descent computed via simple back-propagation,
avoiding the need to compute intractable partition functions. Depending
on the model, this may allow us to draw from any number of recently demonstrated
supervised training tricks. 
For example, one could use a convolutional
architecture with max-pooling for parametric parsimony and computational
efficiency, 
or dropout ~\citep{Hinton-et-al-arxiv2012} to prevent co-adaptation of hidden
representations.

{\bf   3 -- General theory:}
Training the generative (decoding / denoising) component of a GSN 
$P(X|h)$ with noisy representation $h$ is often far easier than modeling $P(X)$ explicitly
(compare the blue and red distributions in \mbox{Figure~\ref{fig:data_px})}. We prove that if our estimated $P(X|h)$
is consistent (e.g. through maximum likelihood), then the
stationary distribution of the resulting chain is a consistent estimator of the
data generating density, ${\calP}(X)$ (Section~\ref{sec:GSN}).
We strengthen the consistency theorems introduced 
in~\citet{Bengio-et-al-NIPS2013} by showing that the corruption distribution may be purely local, not requiring support
over the whole domain of the visible variables (Section~\ref{sec:DAE}).

{\bf   4 -- Consequences of theory: } We show that the model is general and extends to a wide range of architectures,
including sampling procedures whose computation can be unrolled
as a Markov Chain, i.e., architectures that add noise during intermediate
computation in order to produce random samples of a desired distribution (Theorem~\ref{thm:noisy-reconstruction}).
An exciting frontier in machine learning is the problem
of modeling so-called structured outputs, i.e., modeling
a conditional distribution where the output is high-dimensional
and has a complex multimodal joint distribution (given the input variable). We show how GSNs can be used to support such structured output and missing values (Section~\ref{sec:missing_inputs}).

{\bf   5 -- Example application: } In Section~\ref{sec:gsn_experiment} we show an example application of the GSN theory to create a
deep GSN whose computational graph 
resembles the one followed by Gibbs sampling in deep Boltzmann machines (with continuous latent variables),
but that can be trained efficiently with back-propagated gradients
and without layerwise pretraining. 
Because the Markov Chain is defined over a state
$(X,h)$ that includes latent variables, we reap the dual advantage
of more powerful models for a given number of
parameters and better mixing in the chain as we add noise to variables
representing higher-level information, first suggested by the results
obtained by~\citet{Bengio-et-al-ICML2013}
and~\citet{Luo+al-AISTATS2013-small}. The experimental results show that
such a model with latent states indeed mixes better
than shallower models without them (Table~\ref{tab:LL}).

{\bf   6 -- Dependency networks: } Finally, an unexpected result falls out of the GSN theory:
it allows us to provide a novel justification for dependency networks~\citep{HeckermanD2000}
and for the first time define a proper joint distribution between all the visible
variables that is learned by such models (Section~\ref{sec:dependency-nets}).

\svs{2}
\section{Summing over too many major modes}
\label{sec:unsup_hard}
\svs{2}


Many of the computations involved in graphical models (inference, sampling,
and learning) are made intractable and difficult to approximate because
of the large number of non-negligible modes in the modeled distribution
(either directly $P(x)$ or a joint distribution $P(x,h)$ involving latent variables $h$).
In all of these cases, what is intractable is the computation or approximation
of a sum (often weighted by probabilities), such as a marginalization or the estimation
of the gradient of the normalization constant. If only a few terms in this
sum dominate (corresponding to the dominant modes of the distribution), then
many good approximate methods can be found, such as Monte-Carlo Markov
chains (MCMC) methods.

Similarly difficult tasks arise with
structured output problems where one wants to sample from $P(y,h|x)$
and both $y$ and $h$ are high-dimensional and have a complex highly multimodal
joint distribution (given $x$).

Deep Boltzmann machines~\citep{Salakhutdinov+Hinton-2009-small} combine the difficulty of
inference (for the {\em positive phase} where one tries to push the
energies associated with the observed $x$ down) and also that of
sampling (for the {\em negative phase} where one tries to push up the
energies associated with $x$'s sampled from $P(x)$).  
Unfortunately, using an MCMC method to sample from $P(x,h)$ in order to
estimate the gradient of the partition function may be seriously hurt by
the presence of a large number of important modes, as argued below.



To evade the problem of highly multimodal joint or posterior
distributions, the currently known
approaches to dealing with the above intractable sums make very strong explicit 
assumptions (in the parametrization) or
implicit assumptions (by the choice of approximation methods) on the form of the distribution of interest.
In particular, MCMC methods are more likely to produce a good estimator
if the number of non-negligible modes is small: otherwise the
chains would require at least as many MCMC steps as the number of such
important modes, times a factor that accounts for the mixing time
between modes. Mixing time itself can be  very problematic as a trained model
becomes sharper, as it approaches a data generating distribution
that may have well-separated and sharp modes (i.e., manifolds).

We propose to make another assumption that might suffice to bypass
this multimodality problem: the
effectiveness of function approximation. 

In particular, the GSN approach presented in the next section relies on estimating the transition operator
of a Markov chain, e.g. $P(x_t | x_{t-1})$ or $P(x_t, h_t | x_{t-1}, h_{t-1})$.
Because each step of the Markov chain is generally local, these transition
distributions will often include only a very small number of important modes
(those in the neighbourhood of the previous state). Hence the gradient of their
partition function will be easy to approximate. For example consider the
denoising transitions studied by~\citet{Bengio-et-al-NIPS2013} and illustrated
in Figure~\ref{fig:data_px},
where $\tilde{x}_{t-1}$ is a stochastically corrupted version of $x_{t-1}$
and we learn the denoising distribution $P(x | \tilde{x})$.
In the extreme case (studied empirically here) where $P(x | \tilde{x})$
is approximated by a unimodal distribution, the only form of training
that is required involves function approximation (predicting the clean $x$
from the corrupted $\tilde{x}$).

Although having the true $P(x | \tilde{x})$ turn out to be unimodal
makes it easier to find an appropriate family of models for it,
unimodality is by no means required by the GSN framework itself.  One
may construct a GSN using any multimodal model for output (e.g. mixture
of Gaussians, RBMs, NADE, etc.), provided that gradients for the parameters of
the model in question can be estimated (e.g. log-likelihood gradients).



The approach proposed here thus avoids the need for a poor approximation
of the gradient of the partition function in the inner loop of training,
but still has the potential of capturing very rich distributions by relying
mostly  on ``function approximation''.

Besides the approach discussed here, there may well be other
very different ways of evading this problem of 
intractable marginalization, including
approaches such as sum-product
networks~\citep{Poon+Domingos-2011b}, which are based on learning a probability function that
has a tractable form by construction and yet is from a flexible enough family
of distributions.

\svs{2}
\section{Generative Stochastic Networks}
\label{sec:gsn}
\svs{2}


Assume the problem we face is to construct a model for some unknown
data-generating distribution ${\calP}(X)$ given only examples of $X$ drawn
from that distribution. In many cases, the unknown distribution
${\calP}(X)$ is complicated, and modeling it directly can be difficult.

A recently proposed approach using denoising autoencoders transforms the 
difficult task of modeling ${\calP}(X)$ into a supervised learning problem that may be much easier to solve. The basic approach is as follows: given a clean example data point $X$ from ${\calP}(X)$, we obtain a corrupted version $\tilde{X}$ by sampling from some corruption distribution ${\cal C}(\tilde{X}|X)$. For example, we might take a clean image, $X$, and add random white noise to produce $\tilde{X}$. We then use supervised learning methods to train a function to reconstruct, as accurately as possible, any $X$ from the data set given only a noisy version $\tilde{X}$. As shown in Figure~\ref{fig:data_px}, the reconstruction distribution ${\calP}(X|\tilde{X})$ may often be much easier to learn than the data distribution ${\calP}(X)$, {\em because ${\calP}(X|\tilde{X})$
tends to be dominated by a single or few major modes} (such as the roughly Gaussian shaped
density in the figure).

But how does learning the reconstruction distribution help us solve our
original problem of modeling ${\calP}(X)$? The two problems are clearly
related, because if we knew everything about ${\calP}(X)$, then our
knowledge of the ${\cal C}(\tilde{X}|X)$ that we chose would allow us to
precisely specify the optimal reconstruction function via Bayes rule:
${\calP}(X|\tilde{X}) = \frac{1}{z} {\cal C}(\tilde{X}|X){\calP}(X)$,
where $z$ is a normalizing constant that does not depend on $X$. As one
might hope, the relation is also true in the opposite direction: once we
pick a method of adding noise, ${\cal C}(\tilde{X}|X)$, knowledge of the
corresponding reconstruction distribution ${\calP}(X|\tilde{X})$ is
sufficient to recover the density of the data ${\calP}(X)$.

This intuition was borne out by proofs in two recent papers.
\citet{Alain+Bengio-ICLR2013} showed that denoising auto-encoders
with small Gaussian corruption and squared error loss estimated the
score (derivative of the log-density with respect to the input) of 
continuous observed random variables. More recently, \citet{Bengio-et-al-NIPS2013}
generalized this to arbitrary variables (discrete, continuous or both),
arbitrary corruption (not necessarily asymptotically small), and arbitrary
loss function (so long as they can be seen as a log-likelihood).

Beyond proving that ${\calP}(X|\tilde{X})$ is sufficient to reconstruct the data density, \citet{Bengio-et-al-NIPS2013} also demonstrated a method of sampling from a learned, parametrized model of the density, $P_\theta(X)$, by running a Markov chain that alternately adds noise using ${\cal C}(\tilde{X}|X)$ and denoises by sampling from the learned $P_\theta(X|\tilde{X})$, which is trained to approximate the true ${\calP}(X|\tilde{X})$. The most important contribution of that paper was demonstrating that if a learned, parametrized reconstruction function $P_\theta(X|\tilde{X})$ converges to the true ${\calP}(X|\tilde{X})$, then under some relatively benign conditions the stationary distribution $\pi(X)$ of the resulting Markov chain will exist and will indeed converge to the data distribution ${\calP}(X)$.

Before moving on, we should pause to make an important point clear. Alert readers may have noticed that ${\calP}(X|\tilde{X})$ and ${\calP}(X)$ can each be used to reconstruct the other given knowledge of ${\cal C}(\tilde{X}|X)$. Further, if we assume that we have chosen a simple ${\cal C}(\tilde{X}|X)$ (say, a uniform Gaussian with a single width parameter), then ${\calP}(X|\tilde{X})$ and ${\calP}(X)$ must both be of approximately the same complexity. Put another way, we can never hope to combine a simple ${\cal C}(\tilde{X}|X)$ and a simple ${\calP}(X|\tilde{X})$ to model a complex ${\calP}(X)$. Nonetheless, it may still be the case that ${\calP}(X|\tilde{X})$ is easier to {\em model} than ${\calP}(X)$
 due to reduced computational
complexity in computing or approximating the partition functions of the conditional distribution mapping corrupted input $\tilde{X}$ to 
the distribution of corresponding clean input $X$. Indeed, because that conditional is going to be mostly assigning
probability to $X$ locally around $\tilde{X}$, ${\calP}(X|\tilde{X})$ has only one or a few modes, while ${\calP}(X)$ can have
a very large number.

So where did the complexity go? ${\calP}(X|\tilde{X})$ has fewer modes than ${\calP}(X)$, but {\em the location of these modes depends on the value of $\tilde{X}$}. It is precisely this mapping from $\tilde{X} \rightarrow$ {\em mode location} that allows us to trade a difficult density modeling problem for a supervised function approximation problem that admits application of many of the usual supervised learning tricks.


In the next four sections, we extend previous results in several directions.

\svs{2}
\subsection{Generative denoising autoencoders with local noise}
\label{sec:DAE}
\svs{2}

The main theorem in \citet{Bengio-et-al-NIPS2013}, reproduced below, requires that the
Markov chain be ergodic.
A set of conditions
guaranteeing ergodicity is given in the aforementioned paper, but
these conditions are restrictive in requiring that ${\cal C}(\tilde{X}|X)>0$ everywhere
that ${\calP}(X)>0$.
Here we show how to relax these conditions and still
guarantee ergodicity through other means.

Let $P_{\theta_n}(X | \tilde{X})$
be a denoising auto-encoder that has been trained on $n$ training examples.
$P_{\theta_n}(X | \tilde{X})$
assigns a probability to $X$, given $\tilde{X}$,
when $\tilde{X} \sim {\cal C}(\tilde{X}|X)$.
This estimator defines a Markov chain $T_n$ obtained by sampling
alternatively an $\tilde{X}$ from ${\cal C}(\tilde{X}|X)$ and
an $X$ from $P_\theta(X | \tilde{X})$. Let $\pi_n$ be the asymptotic
distribution of the chain defined by $T_n$, if it exists.
The following theorem is proven by~\citet{Bengio-et-al-NIPS2013}.
\begin{theorem}
\label{thm:consistency}
{\bf If}
$P_{\theta_n}(X | \tilde{X})$ is a consistent estimator of the true conditional
distribution ${\calP}(X | \tilde{X})$
{\bf and}
$T_n$ defines an 
ergodic Markov chain,
{\bf then}
as $n\rightarrow \infty$, the asymptotic distribution $\pi_n(X)$ of the generated
samples converges to the data-generating distribution ${\calP}(X)$. 
\end{theorem}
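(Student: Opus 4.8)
The plan is to compare the learned chain $T_n$ against an idealized chain $T$ that uses the \emph{true} reconstruction ${\calP}(X|\tilde{X})$ in place of $P_{\theta_n}(X|\tilde{X})$: first show that $T$ leaves ${\calP}(X)$ invariant, then transfer this to $T_n$ by letting $n\to\infty$ and exploiting ergodicity.

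First I would pin down the role of the true conditional. Because $\tilde{X}$ is produced by applying ${\cal C}(\tilde{X}|X)$ to $X\sim{\calP}(X)$, the pair $(X,\tilde{X})$ has joint density ${\calP}(X,\tilde{X})={\cal C}(\tilde{X}|X){\calP}(X)$, and by Bayes' rule the true reconstruction is exactly its conditional, ${\calP}(X|\tilde{X})={\calP}(X,\tilde{X})/\int{\cal C}(\tilde{X}|X'){\calP}(X')\,dX'$. Hence the idealized operator $T$, which alternately samples $\tilde{X}\sim{\cal C}(\cdot|X)$ and $X\sim{\calP}(\cdot|\tilde{X})$, is precisely a blocked Gibbs sampler for ${\calP}(X,\tilde{X})$. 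A one-line check then shows ${\calP}(X)$ is invariant under $T$: if $X\sim{\calP}$ then $(X,\tilde{X})\sim{\calP}(X,\tilde{X})$, so the marginal after the corruption step is ${\calP}(\tilde{X})$, and resampling from ${\calP}(\cdot|\tilde{X})$ returns a draw from ${\calP}(X)$.

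Next I would invoke consistency. As $n\to\infty$ we have $P_{\theta_n}(X|\tilde{X})\to{\calP}(X|\tilde{X})$, so the reconstruction kernel of $T_n$ converges to that of $T$, and therefore the full transition kernel of $T_n$ converges to that of $T$. Writing the stationarity relations $\pi_n=\pi_n T_n$, I would pass to the limit: any limit point $\pi_\infty$ of the sequence $(\pi_n)$ satisfies $\pi_\infty=\pi_\infty T$. Ergodicity of each $T_n$ guarantees that $\pi_n$ exists and is unique, while ergodicity of the limiting $T$ forces its invariant distribution to be unique; since we have already identified that distribution as ${\calP}(X)$, we conclude $\pi_\infty={\calP}(X)$. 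As this holds for every limit point, $\pi_n\to{\calP}(X)$.

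The hard part will be the continuity step — transferring $T_n\to T$ into $\pi_n\to\pi$. Stationary distributions are not in general continuous functionals of the transition kernel, so the argument cannot rest on pointwise convergence of $P_{\theta_n}$ alone; one needs the convergence to be strong enough (e.g.\ in total variation, together with a tightness or compactness control on $(\pi_n)$) for the fixed-point relation to survive the limit and for ergodicity of $T$ to pin down the limit point uniquely. Making the qualitative notion of ``consistent estimator'' quantitatively compatible with this continuity requirement is the delicate aspect of the proof, and is exactly where the ergodicity hypothesis does its work.
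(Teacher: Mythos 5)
Your proposal takes essentially the same route as the proof this paper relies on for Theorem~\ref{thm:consistency} (which is quoted from Bengio et al., 2013c rather than re-proved here): identify the idealized chain that uses the true conditional $\mathcal{P}(X|\tilde{X})$ as a Gibbs sampler for the joint $\mathcal{P}(X,\tilde{X}) = \mathcal{P}(X)\,\mathcal{C}(\tilde{X}|X)$, conclude that $\mathcal{P}(X)$ is its stationary marginal, and then transfer this to $T_n$ via consistency of $P_{\theta_n}$ and ergodicity. The continuity step you flag (passing from convergence of transition kernels to convergence of stationary distributions) is indeed the delicate point, and the cited proof treats it no more rigorously than you do, so your attempt is faithful to, and candid about, the original argument.
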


In order for Theorem~\ref{thm:consistency} to apply, the chain must be ergodic. One set of conditions under which this occurs is given in the aforementioned paper. We slightly restate them here:

\begin{figure}[htpb]
\centering
\includegraphics[width=1\linewidth]{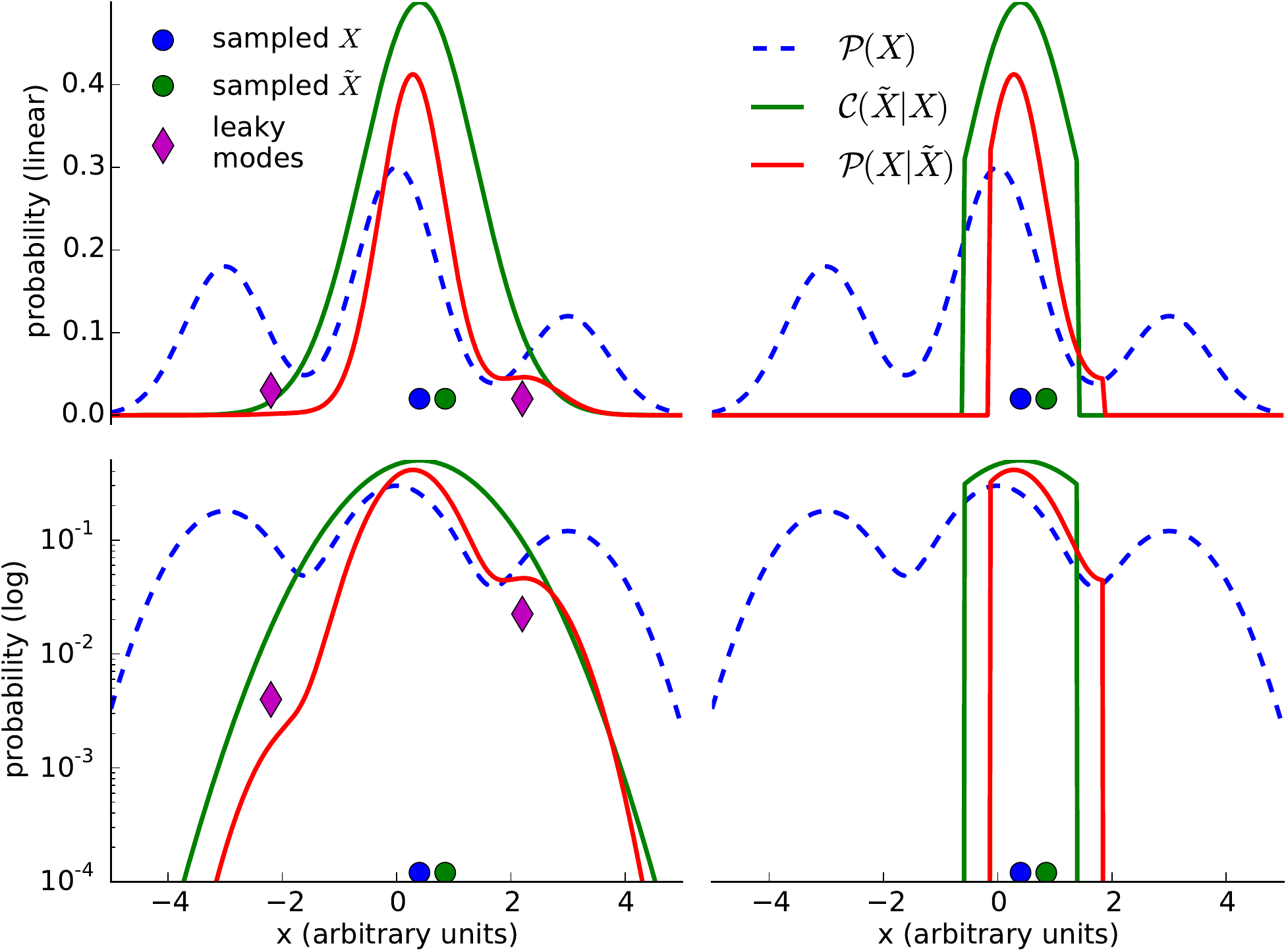}
\caption{If ${\cal C}(\tilde{X}|X)$ is globally supported as required by Corollary~\ref{cor:ergonew} \citep{Bengio-et-al-NIPS2013}, then for $P_{\theta_n}(X|\tilde{X})$ to converge to ${\calP}(X|\tilde{X})$, it will eventually have to model all of the modes in ${\calP}(X)$, even though the modes are damped (see ``leaky modes'' on the left). However, if we guarantee ergodicity through other means, as in Corollary~\ref{cor:ergonew2}, we can choose a local ${\cal C}(\tilde{X}|X)$ and allow $P_{\theta_n}(X|\tilde{X})$ to model only the local structure of ${\calP}(X)$ (see right).}
\label{fig:modes_1d}
\end{figure}

\begin{corollary}
\label{cor:ergonew}
{\bf If} the support for both the data-generating distribution and
denoising model are contained in and non-zero in
a finite-volume region $V$ (i.e., $\forall \tilde{X}$, $\forall X\notin V,\; {\calP}(X)=0, P_\theta(X|\tilde{X})=0$ and $\forall \tilde{X}$, $\forall X\in V,\; {\calP}(X)>0, P_\theta(X|\tilde{X})>0,  {\cal C}(\tilde{X}|X)>0$)
{\bf and} these statements remain
true in the limit of $n\rightarrow \infty$, {\bf then}
the chain defined by $T_n$ will be ergodic.
\end{corollary}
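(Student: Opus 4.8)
The plan is to show that, under the stated support conditions, the one-step transition kernel of $T_n$ is strictly positive everywhere on $V \times V$, and then to argue that this positivity together with the finiteness of the volume of $V$ forces irreducibility and aperiodicity, hence ergodicity.

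First I would write the transition operator explicitly. Starting from a state $X$, the chain draws $\tilde{X} \sim {\cal C}(\tilde{X}|X)$ and then $X' \sim P_{\theta_n}(X'|\tilde{X})$, so its one-step transition density is
\[
T_n(X'|X) = \int P_{\theta_n}(X'|\tilde{X})\,{\cal C}(\tilde{X}|X)\,d\tilde{X}.
\]
Since $P_{\theta_n}(X'|\tilde{X}) = 0$ for every $X' \notin V$, the chain never leaves $V$ once started inside it, so it suffices to analyze $T_n$ as a chain on $V$, which contains the support of ${\calP}$.

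Second I would establish strict positivity on $V \times V$. Fix any $X, X' \in V$. By hypothesis ${\cal C}(\tilde{X}|X) > 0$ for every $\tilde{X}$ (because $X \in V$) and $P_{\theta_n}(X'|\tilde{X}) > 0$ for every $\tilde{X}$ (because $X' \in V$), so the integrand is strictly positive over the whole range of $\tilde{X}$ and therefore $T_n(X'|X) > 0$. I would then translate this into ergodicity using standard Markov-chain theory: strict positivity of the kernel on all of $V \times V$ makes the chain $\phi$-irreducible with respect to Lebesgue measure restricted to $V$, since from any starting point every subset of $V$ of positive volume is reached in a single step with positive probability; positivity of $T_n(X'|X)$ for $X'$ near $X$ rules out any periodic decomposition, giving aperiodicity. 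Because $V$ has finite volume, the whole region can be shown to be a small set satisfying a minorization (Doeblin) condition, which guarantees a unique invariant distribution $\pi_n$ toward which the chain converges from every initial state --- exactly the ergodicity required by Theorem~\ref{thm:consistency}.

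The hard part will be this last step: upgrading the pointwise strict positivity of the kernel into a genuine uniform minorizing measure, since pointwise positivity alone does not in general deliver the uniform lower bound that Doeblin's condition demands. The finite volume of $V$ is the essential ingredient --- it is what lets one integrate the strictly positive kernel against a reference measure of finite mass and extract a common positive lower bound (after invoking, say, lower semicontinuity or boundedness of the densities on the bounded region $V$), thereby promoting mere irreducibility to the uniform ergodicity that ensures a unique, globally attracting stationary distribution.
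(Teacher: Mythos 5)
Your proof is correct in outline, but note that the paper itself never actually proves Corollary~\ref{cor:ergonew}: it is restated, with a citation, from \citet{Bengio-et-al-NIPS2013}, and the only ergodicity proof written out in this paper is for the local-noise variant, Corollary~\ref{cor:ergonew2}. Compared with that proof, yours is the natural one-step specialization. Where the paper must chain together $\epsilon$-local moves along a finite path through $V$ to establish irreducibility (because there the corruption kernel is only locally supported), your global-support hypotheses make every pair of points in $V$ communicate in a single step, so the path construction collapses to the observation that
\[
T_n(X'|X) = \int P_{\theta_n}(X'|\tilde{X})\,{\cal C}(\tilde{X}|X)\,d\tilde{X}
\]
has a strictly positive integrand whenever $X, X' \in V$. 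Aperiodicity (positive probability of staying in place) and recurrence (from the finiteness of $V$) are then invoked in the same way in both arguments. The one place where you are actually more careful than the paper is the final step: the paper's proof of Corollary~\ref{cor:ergonew2} simply asserts that ``because the chain is irreducible and over a finite state space, it will be positive recurrent,'' silently treating a finite-volume continuous region as if it were a finite state space, whereas you correctly flag that in a continuous state space pointwise positivity of the kernel does not by itself yield the uniform (Doeblin) minorization needed for ergodic convergence, and that some additional regularity (e.g.\ continuity or boundedness of the densities on a compact $V$) is required to extract a uniform lower bound. That residual gap is real, but it is inherited from --- and shared by --- the paper's own level of rigor, so your proposal is, if anything, a more honest account of what the corollary actually requires.
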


If conditions in Corollary~\ref{cor:ergonew} apply, then the chain will be ergodic and Theorem~\ref{thm:consistency} will apply. However, these conditions are sufficient, not necessary, and in many cases they may be artificially restrictive. In particular, Corollary~\ref{cor:ergonew} 
defines a large region $V$ containing any possible $X$ allowed by the model and requires that we maintain the probability of jumping between any two points in a single move to be greater than 0.
While this generous condition helps us easily guarantee the ergodicity of the chain, it also has the unfortunate side effect of requiring that, in order for $P_{\theta_n}(X|\tilde{X})$ to converge to the conditional distribution ${\calP}(X|\tilde{X})$, it must have the capacity to model every mode of ${\calP}(X)$, exactly the difficulty we were trying to avoid. The left two plots in Figure~\ref{fig:modes_1d} show this difficulty: because ${\cal C}(\tilde{X}|X)>0$ everywhere in $V$, every mode of $P(X)$ will leak, perhaps attenuated, into $P(X|\tilde{X})$.

Fortunately, we may seek ergodicity through other means. The following
corollary allows 
us to choose a ${\cal C}(\tilde{X}|X)$ that only makes small jumps, which in turn only requires $P_\theta(X|\tilde{X})$ to model a small part of the space $V$ around each $\tilde{X}$.

Let $P_{\theta_n}(X | \tilde{X})$
be a denoising auto-encoder that has been trained on $n$ training examples
and ${\cal C}(\tilde{X}|X)$ be some corruption distribution.
$P_{\theta_n}(X | \tilde{X})$
assigns a probability to $X$, given $\tilde{X}$,
when $\tilde{X} \sim {\cal C}(\tilde{X}|X)$ and $X \sim {\cal P}(X)$.
Define a Markov chain $T_n$ by alternately sampling
an $\tilde{X}$ from ${\cal C}(\tilde{X}|X)$ and
an $X$ from $P_\theta(X | \tilde{X})$.

\begin{corollary}
\label{cor:ergonew2}
{\bf If} the data-generating distribution is contained in and non-zero in
a finite-volume region $V$ (i.e., $\forall X\notin V,\; {\calP}(X)=0,$ and $\forall X\in V,\; {\calP}(X)>0$)
{\bf and} all pairs of points in $V$ can be connected by a finite-length path through $V$
{\bf and}
for some $\epsilon > 0$,
$\forall \tilde{X} \in V,\forall X\in V$ within $\epsilon$ of each other,
${\cal C}(\tilde{X}|X) > 0$ and $P_{\theta}(X|\tilde{X}) > 0$
{\bf and} these statements remain
true in the limit of $n\rightarrow \infty$,
{\bf then}
the chain defined by $T_n$ will be ergodic.
\end{corollary}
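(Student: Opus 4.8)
The plan is to establish the three ingredients that together give ergodicity of the continuous-state chain $T_n$ on the region $V$: irreducibility (with respect to Lebesgue measure restricted to $V$), aperiodicity, and a unique invariant distribution. The starting point is that one step of the chain takes $X$ to $X'$ through the intermediate corrupted point, so its transition density is obtained by marginalizing over $\tilde{X}$,
\begin{equation}
T_n(X'\mid X) = \int {\cal C}(\tilde{X}\mid X)\, P_\theta(X'\mid \tilde{X})\, d\tilde{X}.
\end{equation}

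First I would show this kernel is strictly positive for nearby points. If $\|X - X'\| < 2\epsilon$, the open balls $B(X,\epsilon)$ and $B(X',\epsilon)$ overlap in a set of positive Lebesgue measure whose intersection with $V$ is still positive-measure for $X,X'$ in the interior of $V$. Every $\tilde{X}$ in this overlap is within $\epsilon$ of both $X$ and $X'$, so the third hypothesis makes both ${\cal C}(\tilde{X}\mid X)>0$ and $P_\theta(X'\mid\tilde{X})>0$ there; restricting the integral above to this set gives a strictly positive lower bound, so $T_n(X'\mid X)>0$. Hence a single step reaches any point within distance $2\epsilon$.

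Next I would chain these local moves along a connecting path to obtain irreducibility. Given $X \in V$ and any target set $A\subseteq V$ of positive measure, I pick a Lebesgue density point $X'$ of $A$ in the interior of $V$; by the second hypothesis there is a finite-length path in $V$ from $X$ to $X'$, which I cover by a finite sequence $X=x_0,x_1,\dots,x_k=X'$ in $V$ with consecutive gaps below $2\epsilon$ (finiteness of $k$ comes from finiteness of the path length). Composing the one-step positivity, the $k$-step kernel $T_n^k(\cdot\mid X)$ has positive density throughout a small ball $B(X',r)$, and since $X'$ is a density point of $A$ we get $T_n^k(A\mid X)\ge T_n^k(A\cap B(X',r)\mid X)>0$. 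This is $\psi$-irreducibility. Aperiodicity follows from the same one-step estimate: $T_n(\cdot\mid X)$ is positive on a full ball around $X$, so no partition of the space into periodic classes is possible.

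Finally, because $V$ has finite volume and the transition mass stays on $V$ (in the limit $P_\theta \to {\calP}(X\mid\tilde{X})$ is supported on $V$, since ${\calP}(X)=0$ off $V$), the irreducible aperiodic chain is positive recurrent and admits a unique invariant distribution to which it converges, i.e.\ it is ergodic in the sense required by Theorem~\ref{thm:consistency}. The step I expect to be the main obstacle is the measure-theoretic bookkeeping at the boundary of $V$: the ball $B(X,2\epsilon)$ can protrude outside $V$, so I must phrase everything in terms of interior density points and confirm that the reachable overlap inside $V$ keeps positive measure at each link of the path, while also checking that mass does not leak out of $V$ so that positive recurrence genuinely holds.
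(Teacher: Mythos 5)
Your proof follows the same route as the paper's own: discretize the finite-length path guaranteed by the second hypothesis into steps small relative to $\epsilon$, show each step of the chain has positive transition probability, chain these to get irreducibility, get aperiodicity from the positive probability of staying near the current point, and conclude positive recurrence from the finiteness of $V$. Your rendering is in fact more careful than the paper's: you work with the marginalized kernel $\int {\cal C}(\tilde{X}\mid X)\,P_\theta(X'\mid\tilde{X})\,d\tilde{X}$ and positive-measure sets of intermediate points, whereas the paper argues pointwise (``the probability of sampling $\tilde{X}=x_{i+1}$ is positive''), which is only meaningful for densities, and it even invokes a ``finite state space'' to get positive recurrence on what is a continuous domain.

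One step of your argument is overstated and would fail as written. You claim that for $X,X'$ in the interior of $V$ with $\|X-X'\|<2\epsilon$, the lens $B(X,\epsilon)\cap B(X',\epsilon)$ meets $V$ in positive measure. For non-convex $V$ this is false: take $V$ a thin horseshoe whose two tips are within $2\epsilon$ of each other in Euclidean distance; the lens sits in the gap between the tips, entirely outside $V$, so the one-step kernel between those two points can be zero. The same pathology can occur for consecutive points chosen along the path with gaps below $2\epsilon$, since the path connecting them may take a long detour through $V$ while the straight-line lens leaves $V$. The repair is immediate and brings you back to exactly what the paper does: choose the discretization with gaps strictly below $\epsilon$. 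Then $x_{i+1}$ itself lies in the open set $B(x_i,\epsilon)\cap B(x_{i+1},\epsilon)$, so a neighborhood of $x_{i+1}$ inside $V$ (taking $x_{i+1}$ to be an interior or density point of $V$) meets the lens in positive measure, and both ${\cal C}(\tilde{X}\mid x_i)>0$ and $P_\theta(x_{i+1}\mid\tilde{X})>0$ hold on that set by the $\epsilon$-hypothesis. With that single change your argument is sound, and the remainder (chaining, aperiodicity, positive recurrence on the finite-volume $V$) stands.
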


\begin{proof}
Consider any two points $X_a$ and $X_b$ in $V$.
By the assumptions of Corollary~\ref{cor:ergonew2}, there exists a finite length path between $X_a$ and $X_b$ through $V$. Pick one such finite length path $P$.
Chose a finite series of points $x = \{x_1, x_2, \ldots, x_k\}$ along $P$, with $x_1 = X_a$ and $x_k = X_b$ such that the distance between every pair of consecutive points $(x_i, x_{i+1})$ is less than $\epsilon$ as defined in Corollary~\ref{cor:ergonew2}.
Then the probability of sampling $\tilde{X} = x_{i+1}$ from ${\cal C}(\tilde{X}|x_i))$ will be positive, because ${\cal C}(\tilde{X}|X)) > 0$ for all $\tilde{X}$ within $\epsilon$ of $X$ by the assumptions of Corollary~\ref{cor:ergonew2}. Further, the probability of sampling $X = \tilde{X} = x_{i+1}$ from $P_{\theta}(X|\tilde{X})$ will be positive from the same assumption on $P$.
Thus the probability of jumping along the path from $x_i$ to $x_{i+1}$, $T_n(X_{t+1} = x_{i+1}|X_{t} = x_i)$, will be greater than zero for all jumps on the path.
Because there is a positive probability finite length path between all pairs of points in $V$, all states commute, and the chain is irreducible.
If we consider $X_a = X_b \in V$, by the same arguments $T_n(X_t = X_a|X_{t-1} = X_a) > 0$. Because there is a positive probability of remaining in the same state, the chain will be aperiodic.
Because the chain is irreducible and over a finite state space, it will be positive recurrent as well. Thus, the chain defined by $T_n$ is ergodic.
\end{proof}

Although this is a weaker condition that has the advantage of making
the denoising distribution even easier to model (probably having less
modes), we must be careful to choose the ball size $\epsilon$ large
enough to guarantee that one can jump often enough between the major modes of ${\calP}(X)$
when these are separated by zones of tiny probability. $\epsilon$ must be larger than half the 
largest distance one would have to travel across a desert of low probability
separating two nearby modes (which if not connected in this way would make
$V$ not anymore have a single connected component). Practically, there would
be a trade-off between the difficulty of estimating ${\calP}(X|\tilde{X})$
and the ease of mixing between major modes separated by a very low density zone.

The generalization of the above results presented in the next section is meant to help
deal with this mixing problem. It is inspired by the recent work~\citep{Bengio-et-al-ICML2013}
showing that mixing between modes can be a serious problem for RBMs and DBNs,
and that well-trained deeper models can greatly alleviate it by allowing the
mixing to happen at a more abstract level of representation (e.g., where some
bits can actually represent which mode / class / manifold is considered).

\svs{2}
\subsection{Generalizing the denoising autoencoder to GSNs}
\label{sec:GSN}
\svs{2}


The denoising auto-encoder Markov chain is defined by 
$\tilde{X}_t \sim C(\tilde{X}|X_t)$ and 
$X_{t+1} \sim P_\theta(X | \tilde{X}_t)$, where $X_t$ alone can
serve as the state of the chain. The GSN framework generalizes this by defining
a Markov chain with both a visible $X_t$ and a latent variable $H_t$ as state
variables, of the form
\vspace{-0.4em}
\begin{eqnarray*}
   H_{t+1} &\sim& P_{\theta_1}(H | H_t, X_t) \\
   X_{t+1} &\sim& P_{\theta_2}(X | H_{t+1}).
\end{eqnarray*}
\begin{center}
\includegraphics[width=0.4\textwidth]{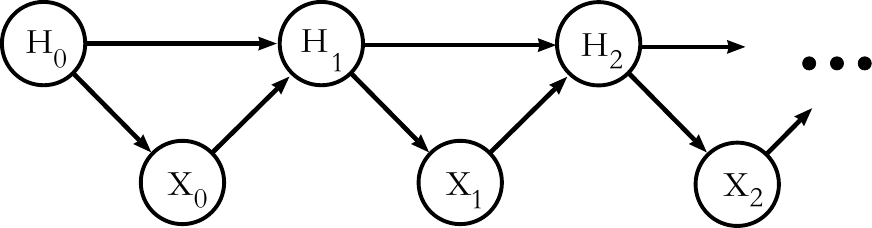} 
\end{center}
Denoising auto-encoders are thus a special case of GSNs.
Note that, given that the distribution of $H_{t+1}$ depends on a previous value of $H_t$,
we find ourselves with an extra $H_0$ variable added at the beginning of the chain.
This $H_0$ complicates things when it comes to training, but when
we are in a sampling regime we can simply wait a sufficient
number of steps to burn in.

The next theoretical results give conditions for making the stationary distributions
of the above Markov chain match a target data generating distribution.


\begin{theorem}
\label{thm:noisy-reconstruction}
Let ${(H_t,X_t)}_{t=0}^\infty$ be the Markov chain defined by the following graphical model.
\begin{center}
\includegraphics[width=0.4\textwidth]{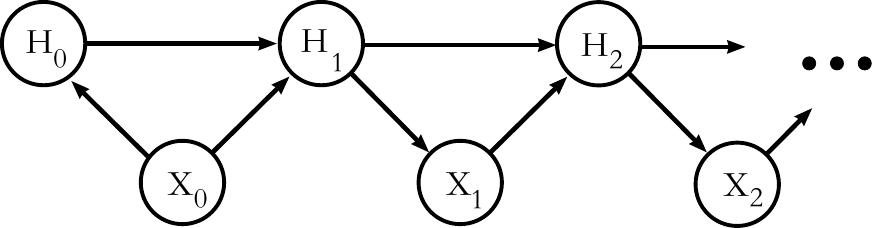} 
\end{center}
\vspace{-0.4em}
If we assume that the chain has a stationary distribution $\pi_{X,H}$, and that for every value of $(x,h)$ we have that
\vspace{-0.4em}
\begin{itemize}
\item all the $P(X_t = x | H_t=h) = g(x,h)$ share the same density for $t \geq 1$
\item all the $P(H_{t+1} = h | H_t=h', X_t=x) = f(h, h', x)$ share the same density for $t \geq 0$
\item $P(H_0=h|X_0=x)=P(H_1=h|X_0=x)$
\item $P(X_1=x|H_1=h)=P(X_0=x|H_1=h)$
\end{itemize}
\vspace{-0.4em}
then for every value of $(x,h)$ we get that
\vspace{-0.6em}
\begin{itemize}
\item $P(X_0 = x | H_0=h) = g(x,h)$ holds, which is something that was assumed only for $t\geq 1$
\item $P(X_t = x, H_t = h) = P(X_0 = x, H_0 = h)$ for all $t\geq 0$
\item the stationary distribution $\pi_{H,X}$ has a marginal distribution $\pi_X$ such that $\pi\left(x\right) = P\left(X_0=x\right)$.
\end{itemize}
\vspace{-0.2em}
Those conclusions show that our Markov chain has the property that its samples in $X$ are drawn from the same distribution as $X_0$.
\end{theorem}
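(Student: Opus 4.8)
The plan is to prove the three conclusions in the order listed, since each feeds the next, and the crux is the first one. It is the statement that ``reverses'' the initial arrow of the graph: it asserts that the backward conditional $P(X_0=x\mid H_0=h)$ coincides with the shared decoding density $g(x,h)$ that governs all later steps. First I would observe that the $H$-marginals at times $0$ and $1$ agree: integrating the third assumption $P(H_0=h\mid X_0=x)=P(H_1=h\mid X_0=x)$ against $P(X_0=x)$ over $x$ yields $P(H_0=h)=P(H_1=h)$. Then, applying Bayes' rule twice,
\begin{align*}
P(X_0=x\mid H_0=h)
&=\frac{P(H_0=h\mid X_0=x)\,P(X_0=x)}{P(H_0=h)}\\
&=\frac{P(H_1=h\mid X_0=x)\,P(X_0=x)}{P(H_1=h)}\\
&=P(X_0=x\mid H_1=h),
\end{align*}
where the second equality uses the third assumption together with $P(H_0=h)=P(H_1=h)$. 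The fourth assumption rewrites the right-hand side as $P(X_1=x\mid H_1=h)$, and the first assumption at $t=1$ identifies this with $g(x,h)$, which proves the first conclusion.

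For the second conclusion I would write down the time-homogeneous transition kernel on the joint state $(X_t,H_t)$. From $(x',h')$ the chain samples $H_{t+1}=h$ with density $f(h,h',x')$ and then $X_{t+1}=x$ with density $g(x,h)$ --- valid for every $t\geq 0$ by the second assumption and by the first assumption applied at index $t+1\geq 1$ --- so the one-step kernel factorizes as $T\bigl((x,h)\mid(x',h')\bigr)=f(h,h',x')\,g(x,h)$. I would then verify that $\pi(x,h):=g(x,h)\,P(H_0=h)$ is invariant under $T$. Pulling $g(x,h)$ outside the integral, invariance collapses to showing that the induced map on the $H$-marginal fixes $P(H_0=\cdot)$. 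Using the first conclusion to write $g(x',h')\,P(H_0=h')=P(X_0=x',H_0=h')$, the remaining integral $\int\!\!\int P(X_0=x',H_0=h')\,f(h,h',x')\,dx'\,dh'$ is precisely $P(H_1=h)$, which equals $P(H_0=h)$ by the marginal identity already established. Hence $T\pi=\pi$.

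By the first conclusion the initial joint satisfies $P(X_0=x,H_0=h)=g(x,h)\,P(H_0=h)=\pi(x,h)$, so the chain starts in its invariant distribution; since the kernel is identical at every step, $P(X_t=x,H_t=h)=\pi(x,h)=P(X_0=x,H_0=h)$ for all $t\geq 0$, which is the second conclusion. The third then follows by marginalizing,
\begin{align*}
\pi_X(x)=\int \pi(x,h)\,dh
&=\int P(X_0=x\mid H_0=h)\,P(H_0=h)\,dh\\
&=P(X_0=x),
\end{align*}
after which identifying the assumed stationary distribution $\pi_{X,H}$ with $\pi$ gives $\pi_X(x)=P(X_0=x)$.

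The main obstacle is the first conclusion: there is no direct route from the data distribution to the backward conditional $X_0\mid H_0$, and it emerges only by combining the two ``extra'' consistency assumptions with the nontrivial fact that the $H$-marginals at times $0$ and $1$ coincide. Once that reversal is secured the rest is essentially bookkeeping --- invariance of $\pi$ reduces to the single scalar identity $P(H_0=h)=P(H_1=h)$, and both remaining conclusions then follow at once.
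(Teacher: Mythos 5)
Your proposal is correct and follows essentially the same route as the paper's proof: both hinge on integrating $P(H_0=h\mid X_0=x)=P(H_1=h\mid X_0=x)$ against $P(X_0=x)$ to get $P(H_0=h)=P(H_1=h)$, use the two consistency assumptions to ``reverse the arrow'' and obtain $P(X_0=x\mid H_0=h)=g(x,h)$, and then exploit time-homogeneity of the $(f,g)$ kernel to propagate the law of $(X_0,H_0)$ to all $t$. The only differences are organizational: you obtain the first conclusion directly via Bayes' rule and phrase the propagation as an explicit invariance check $T\pi=\pi$ for the joint kernel $f(h,h',x')\,g(x,h)$, whereas the paper first proves $P(X_1=x,H_1=h)=P(X_0=x,H_0=h)$, deduces the first conclusion from it, and then runs a recursion through the $H$-marginal kernel $\int f(h,h',x)\,g(x,h')\,dx$ --- equivalent steps in a different order.
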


\begin{proof}
The proof hinges on a few manipulations done with the first variables to show
that $P(X_t = x | H_t=h) = g(x,h)$, which is assumed for $t \geq 1$, also holds for $t=0$.

For all $h$ we have that
\vspace{-0.1em}
\begin{eqnarray*}
P(H_0=h) & = & \int P(H_0=h|X_0=x)P(X_0=x) dx \\
         & = & \int P(H_1=h|X_0=x)P(X_0=x) dx \\
         & = & P(H_1=h).
\end{eqnarray*}

The equality in distribution between $(X_1,H_1)$ and $(X_0,H_0)$ is obtained with
\vspace{-0.1em}
\begin{eqnarray*}
P(X_1=x, H_1=h) & = & P(X_1=x | H_1=h) P(H_1=h) \\
                & = & P(X_0=x | H_1=h) P(H_1=h) \\
                &   & \hspace{1em} \textrm{(by hypothesis)} \\
                & = & P(X_0=x, H_1=h) \\
                & = & P(H_1=h | X_0=x)P(X_0=x) \\
                & = & P(H_0=h | X_0=x)P(X_0=x) \\
                &   & \hspace{1em} \textrm{(by hypothesis)} \\
                & = & P(X_0=x, H_0=h).
\end{eqnarray*}

Then we can use this to conclude that
\vspace{-0.1em}
\begin{eqnarray*}
         & P(X_0 = x, H_0 = h) = P(X_1 = x, H_1 = h) \\
\implies & P(X_0 = x |H_0 = h) = P(X_1 = x | H_1 = h) = g(x,h)
\end{eqnarray*}
so, despite the arrow in the graphical model being turned the other way, we have that
the density of $P(X_0 = x |H_0 = h)$ is the same as for all other $P(X_t = x |H_t = h)$ with $t \geq 1$.

Now, since the distribution of $H_1$ is the same as the distribution of $H_0$, and
the transition probability $P(H_1=h|H_0=h')$ is entirely defined by the $(f,g)$ densities
which are found at every step for all $t\geq 0$, then we know that $(X_2, H_2)$ will have the
same distribution as $(X_1, H_1)$. To make this point more explicitly,
\begin{align*}
   & P(H_1 = h | H_0 = h') \\
 = & \int P(H_1 = h | H_0 = h', X_0 = x)  P(X_0 = x | H_0 = h') dx \\
 = & \int f(h, h', x)  g(x, h') dx \\
 = & \int P(H_2 = h | H_1 = h', X_1 = x)  P(X_1 = x | H_1 = h') dx \\
 = & P(H_2 = h | H_1 = h')
\end{align*}
This also holds for $P(H_3|H_2)$ and for all subsequent $P(H_{t+1}|H_t)$.
This relies on the crucial step where we demonstrate that $P(X_0 = x | H_0 = h) = g(x,h)$.
Once this was shown, then we know that we are using the
same transitions expressed in terms of $(f,g)$ at every step.

Since the distribution of $H_0$ was shown above to be the same as the distribution of $H_1$,
this forms a recursive argument that shows that all the $H_t$ are equal in distribution to $H_0$.
Because $g(x,h)$ describes every $P(X_t = x | H_t = h)$, we have that
all the joints $(X_t, H_t)$ are equal in distribution to $(X_0, H_0)$.

This implies that the stationary distribution $\pi_{X,H}$ is the same as that of $(X_0, H_0)$.
Their marginals with respect to $X$ are thus the same.
\end{proof}

To apply Theorem \ref{thm:noisy-reconstruction} in a context
where we use experimental data to learn a model, we would like
to have certain guarantees concerning the robustness of
the stationary density $\pi_X$. When a model lacks capacity, or
when it has seen only a finite number of training examples,
that model can be viewed as a perturbed version of the exact
quantities found in the statement of
Theorem \ref{thm:noisy-reconstruction}.

A good overview of results from perturbation theory
discussing stationary distributions in finite state Markov chains
can be found in \citep{Cho2000comparisonperturbation}.
We reference here only one of those results.

\begin{theorem}\label{thm:schweitzer_inequality}
Adapted from \citep{Schweitzer1968perturbation}

Let $K$ be the transition matrix of a finite state, irreducible,
homogeneous Markov chain. Let $\pi$ be its stationary distribution
vector so that $K\pi=\pi$. Let $A=I-K$ and $Z=\left(A+C\right)^{-1}$
where $C$ is the square matrix whose columns all contain $\pi$.
Then, if $\tilde{K}$ is any transition matrix (that also satisfies
the irreducible and homogeneous conditions) with stationary distribution
$\tilde{\pi}$, we have that
\[
\left\Vert \pi-\tilde{\pi}\right\Vert _{1}\leq\left\Vert Z\right\Vert _{\infty}\left\Vert K-\tilde{K}\right\Vert _{\infty}.
\label{eqn:schweitzer_inequality}
\]

\end{theorem}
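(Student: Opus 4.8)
The plan is to reduce the inequality to a single perturbation identity expressing $\tilde\pi-\pi$ in terms of $Z$ and $K-\tilde K$, and then to control it by submultiplicativity. I use the paper's convention throughout: $\pi$ and $\tilde\pi$ are column probability vectors with $K\pi=\pi$ and $\tilde K\tilde\pi=\tilde\pi$, so $K$ is column-stochastic ($\mathbf 1^{T}K=\mathbf 1^{T}$, with $\mathbf 1$ the all-ones vector), and $C=\pi\mathbf 1^{T}$ is the matrix whose columns all equal $\pi$.

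First I would record the elementary facts that make $Z$ well defined and manipulable. Since $A=I-K$, stationarity gives $A\pi=\pi-K\pi=0$, and irreducibility of $K$ (Perron--Frobenius) makes $1$ a simple eigenvalue, so $\ker A=\mathrm{span}(\pi)$. I would then verify that $A+C$ is invertible: if $(A+C)v=0$ then $Av=-\pi(\mathbf 1^{T}v)$; pairing with $\mathbf 1^{T}$ and using $\mathbf 1^{T}A=0$ forces $\mathbf 1^{T}v=0$, hence $Av=0$, hence $v\in\mathrm{span}(\pi)$, and then $\mathbf 1^{T}v=0$ forces $v=0$. This guarantees $Z=(A+C)^{-1}$ exists. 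I would also note $C\pi=\pi(\mathbf 1^{T}\pi)=\pi$ and likewise $C\tilde\pi=\pi$, since both distributions sum to one, and that $\mathbf 1^{T}(A+C)=\mathbf 1^{T}$ gives $\mathbf 1^{T}Z=\mathbf 1^{T}$.

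The core step is the perturbation identity. Writing $\tilde A=I-\tilde K$ and using $\tilde A\tilde\pi=0$, I would compute $A\tilde\pi=\tilde A\tilde\pi+(\tilde K-K)\tilde\pi=(\tilde K-K)\tilde\pi$. Combining this with $C\tilde\pi=\pi$ and with $(A+C)\pi=A\pi+C\pi=\pi$ gives
\[
(A+C)(\tilde\pi-\pi)=(\tilde K-K)\tilde\pi ,
\]
and applying $Z$ produces the key identity $\tilde\pi-\pi=Z(\tilde K-K)\tilde\pi$. Taking the $\ell_1$ vector norm and using $\|\tilde\pi\|_1=1$ with submultiplicativity then yields a bound of the form $\|\pi-\tilde\pi\|_1\le\|Z\|\,\|K-\tilde K\|$.

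The main obstacle is not the identity but the bookkeeping of which operator norms appear. Under the column convention the derivation most naturally pairs the induced matrix $1$-norms of $Z$ and of $K-\tilde K$ with the $\ell_1$ vector norm, whereas the statement records matrix $\infty$-norms. Reconciling the two requires the transpose duality $\|M\|_1=\|M^{T}\|_\infty$ together with the stochastic structure ($\mathbf 1^{T}Z=\mathbf 1^{T}$ and $\mathbf 1^{T}(K-\tilde K)=0$), and this alignment of conventions---rather than any analytic difficulty---is the step demanding the most care.
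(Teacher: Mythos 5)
The paper never proves this theorem: it is imported as a black box (``adapted from'' Schweitzer, 1968, via the Cho--Meyer survey), so there is no internal proof to compare yours against. What you wrote is the standard proof of Schweitzer's bound, and it is correct: $A\pi=0$ with $\ker A=\mathrm{span}(\pi)$ by Perron--Frobenius, invertibility of $A+C$, the perturbation identity $(A+C)(\tilde\pi-\pi)=(\tilde K-K)\tilde\pi$ giving $\tilde\pi-\pi=Z(\tilde K-K)\tilde\pi$, and the norm bound using $\left\Vert \tilde\pi\right\Vert_1=1$.

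The issue you flag in your last paragraph deserves to be stated more strongly, because it is not mere bookkeeping: under the paper's column convention ($K\pi=\pi$, $C=\pi\mathbf{1}^T$) your argument yields
\[
\left\Vert \pi-\tilde\pi\right\Vert_1\;\le\;\left\Vert Z\right\Vert_1\left\Vert K-\tilde K\right\Vert_1,
\]
with induced $1$-norms (maximum column sums), and the inequality as printed, with induced $\infty$-norms (maximum row sums) of the column-convention matrices, is actually \emph{false}. Concretely, take $K=\begin{pmatrix}0.5&0.1\\0.5&0.9\end{pmatrix}$, so that $\pi=(1/6,\,5/6)^T$ and $Z=\begin{pmatrix}14/9&-1/9\\-5/9&10/9\end{pmatrix}$, hence $\left\Vert Z\right\Vert_\infty=5/3$; taking $\tilde K=\begin{pmatrix}0.5&0.2\\0.5&0.8\end{pmatrix}$ gives $\tilde\pi=(2/7,\,5/7)^T$, $\left\Vert K-\tilde K\right\Vert_\infty=0.1$, yet $\left\Vert \pi-\tilde\pi\right\Vert_1=5/21\approx0.238>(5/3)(0.1)=1/6$. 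So the stated inequality holds only in the transposed reading---row-stochastic matrices with $C=\mathbf{1}\pi^T$, which is Schweitzer's original formulation---and your transpose duality $\left\Vert M\right\Vert_1=\left\Vert M^T\right\Vert_\infty$ is exactly the bridge between the two. In short: your proof is right and is the standard one; the ``alignment of conventions'' you worried about is a genuine defect in the paper's adaptation of the statement, not a gap in your argument.
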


This theorem covers the case of discrete data by showing how the stationary
distribution is not disturbed by a great amount when the transition
probabilities that we learn are close to their correct values. We are
talking here about the transition between steps of the chain
$(X_0, H_0), (X_1, H_1), \ldots, (X_t, H_t)$, which are defined
in Theorem \ref{thm:noisy-reconstruction} through the $(f,g)$ densities.

\vspace{1cm}

We avoid discussing the training criterion for a GSN.
Various alternatives exist, but this analysis is for future work.
Right now Theorem \ref{thm:noisy-reconstruction} suggests the following
rules :
\begin{itemize}
\item Pick the transition distribution $f(h, h', x)$ to be useful.
There is no bad $f$ when $g$ can be trained perfectly with infinite capacity.
However, the choice of $f$ can put a great burden on $g$,
and using a simple f, such as one that represents additive gaussian noise,
will lead to less difficulties in training $g$.
In practice, we have also found good results by training $f$ at the
same time by back-propagating the errors from $g$ into $f$. In this way we
simultaneously train $g$ to model the distribution implied by $f$ and
train $f$ to make its implied distribution easy to model by $g$.
\item Make sure that during training
$P(H_0=h|X_0=x) \rightarrow P(H_1=h|X_0=x)$. One interesting way to achieve this is, for each $X_0$ in the training set, iteratively sample $H_1 | (H_0, X_0)$ and substitute the value of $H_1$ as the updated value of $H_0$. Repeat until you have achieved a kind of ``burn in''.
Note that, after the training is completed, when we use the chain for sampling, the samples that we get from its stationary distribution do not depend on $H_0$. This technique of substituting the $H_1$ into $H_0$ does not apply beyond the training step.
\item Define $g(x,h)$ to be your estimator for $P(X_0=x|H_1=h)$, e.g. by training an estimator of this conditional distribution from the samples $(X_0, H_1)$.
\item The rest of the chain for $t \geq 1$ is defined in terms of $(f,g)$.
\end{itemize}

As much as we would like to simply learn $g$ from pairs $(H_0, X_0)$,
the problem is that the training samples $X_0^{(i)}$ are descendants
of the corresponding values of $H_0^{(i)}$ in the original
graphical model that describes the GSN. Those $H_0^{(i)}$ are hidden quantities
in GSN and we have to find a way to deal with them. Setting them all to be
some default value would not work because the relationship between $H_0$ and
$X_0$ would not be the same as the relationship later between $H_t$ and $X_t$
in the chain.

\svs{2}
\subsection{Alternate parametrization with deterministic functions of random quantities}
\label{sec:alternate_params_deterministic_functions}
\svs{2}

There are several equivalent ways of
expressing a GSN. One of the interesting
formulations is to use deterministic functions
of random variables to express the densities $(f,g)$
used in Theorem \ref{thm:noisy-reconstruction}.
With that approach, we define $H_{t+1}=\otherf_{\theta_1}(X_t,Z_t,H_t)$
for some independent noise source $Z_t$,
and we insist that $X_t$ cannot be recovered
exactly from $H_{t+1}$. The advantage of that formulation
is that one can directly back-propagated the reconstruction
log-likelihood $\log P(X_1=x_0 | H_1=f(X_0,Z_0,H_0))$ into all the parameters
of $f$ and $g$ (a similar idea was independently proposed in~\citep{Kingma-arxiv2013} and
also exploited in~\citep{Rezende-et-al-arxiv2014}).

For the rest of this paper, we will use such
a deterministic function $\otherf$ instead of having
$f$ refer to a probability density function. We apologize if it
causes any confusion.

In the setting described at the beginning of section \ref{sec:gsn},
the function playing the role of the ``encoder'' was fixed for the purpose
of the theorem, and we showed that learning only the ``decoder'' part (but a
sufficiently expressive one) sufficed. In this
setting we are learning both, for which some care
is needed. 

One problem would be if the created Markov
chain failed to converge to a stationary distribution.
Another such problem could be that the function $\otherf(X_t,Z_t,H_t)$
learned would try to ignore the noise $Z_t$, or not make the best use
out of it. In that case, the reconstruction distribution would
simply converge to a Dirac at the input $X$.  This is
the analogue of the constraint on auto-encoders that is needed to prevent
them from learning the identity function. Here, we must design the family
from which $f$ and $g$ are learned 
such that when the noise $Z$ is injected, there are always
several possible values of $X$ that could have been the correct original
input.

Another extreme case to think about is when $\otherf(X,Z,H)$ is overwhelmed
by the noise and has lost all information about $X$. In that case the theorems
are still applicable while giving uninteresting results: the learner must
capture the full distribution of $X$ in $P_{\theta_2}(X|H)$ because
the latter is now equivalent to $P_{\theta_2}(X)$, since $\otherf(X,Z,H)$
no longer contains information about $X$. This illustrates
that when the noise is large, the reconstruction distribution (parametrized
by $\theta_2$) will need to have the expressive power to represent
multiple modes. Otherwise, the reconstruction will tend to capture 
an average output, which would visually look like a fuzzy combination
of actual modes. In the experiments performed here, we have only
considered unimodal reconstruction distributions (with factorized 
outputs), because we expect that even if ${\calP}(X|H)$
is not unimodal, it would be dominated by a single mode when the
noise level is small. However, 
future work should investigate multimodal alternatives.

A related element to keep in mind is that one should pick the
family of conditional distributions $P_{\theta_2}(X|H)$ so that
one can sample from them and one can easily train them when given
$(X,H)$ pairs, e.g., by maximum likelihood.

\svs{2}
\subsection{Handling missing inputs or structured output}
\label{sec:missing_inputs}
\svs{2}

In general, a simple way to deal with missing inputs is to clamp the observed
inputs and then apply the Markov chain with the constraint that the observed
inputs are fixed and not resampled at each time step, whereas the unobserved
inputs are resampled each time, {\em conditioned on the clamped inputs}.

One readily proves that this procedure gives
rise to sampling from the appropriate conditional distribution:

\begin{proposition}
\label{prop:clamping}
{\bf If}
a subset $x^{(s)}$ of the elements of $X$ is kept fixed (not resampled) while
the remainder $X^{(-s)}$ is updated stochastically 
during the Markov chain of Theorem~\ref{thm:noisy-reconstruction},
but using $P(X_t | H_t, X_t^{(s)}=x^{(s)})$,
{\bf then}
the asymptotic distribution $\pi_n$ of the Markov chain 
produces samples of $X^{(-s)}$ from
the conditional distribution $\pi_n(X^{(-s)}|X^{(s)}=x^{(s)})$.
\end{proposition}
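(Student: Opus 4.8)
The plan is to treat the clamped dynamics as a genuine Markov chain on the reduced state $(X^{(-s)},H)$, with $X^{(s)}$ frozen at the observed value $x^{(s)}$, and to exhibit the conditional law $\rho(x^{(-s)},h):=\pi_{X,H}(x^{(-s)},h\mid X^{(s)}=x^{(s)})$ of the original stationary distribution as an \emph{invariant} distribution of the clamped transition operator $\tilde T$. Once $\rho$ is shown to be invariant, ergodicity of the clamped chain forces its asymptotic distribution to coincide with $\rho$, and the proposition follows at once by reading off the $X^{(-s)}$-marginal of $\rho$, which is exactly $\pi(X^{(-s)}\mid X^{(s)}=x^{(s)})$. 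Throughout I would use the stationary factorization $\pi_{X,H}(x,h)=g(x,h)\,\pi_H(h)$, available because Theorem~\ref{thm:noisy-reconstruction} guarantees $g(x,h)=P(X_t=x\mid H_t=h)$ at every step, and hence $g(x,h)=\pi(x\mid h)$.

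The main device is the two-stage factorization of the transition operator inherited from Theorem~\ref{thm:noisy-reconstruction}: one full step is an $H$-update $h\mapsto h'$ drawn from $f(h',h,x)$ followed by an $X$-update drawing a fresh $x$ from $g(\cdot,h')$. In the clamped chain the $X$-update is replaced by resampling only $X^{(-s)}$ from the clamped reconstruction $\tilde g(x^{(-s)}\mid h',x^{(s)})$, obtained by conditioning $g(\cdot,h')$ on $X^{(s)}=x^{(s)}$ (dividing $g(x^{(s)},x^{(-s)},h')$ by its $X^{(-s)}$-marginal $\int g(x^{(s)},\xi,h')\,d\xi$), while $X^{(s)}$ is held fixed. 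I would verify that each half-step, in its clamped form, leaves $\rho$ invariant; composing the two then yields invariance under $\tilde T$.

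The $X$-update half is the routine direction. Assuming the state entering it is distributed as $\rho$, a short Bayes/normalization computation shows the resampled state is again $\rho$: the normalizer $\int g(x^{(s)},\xi,h')\,d\xi$ equals $\pi(x^{(s)}\mid h')$ and cancels exactly against the stationary factorization, leaving the conditional unchanged. The $H$-update half is the crux and the step I expect to be the main obstacle. Verifying its invariance reduces to the identity that the joint law of (the pre-update $X^{(s)}$, the post-update $H$) coincides with the stationary joint law of $(X^{(s)},H)$. This is not automatic; it is precisely the reconstruction-consistency hypothesis built into Theorem~\ref{thm:noisy-reconstruction}, namely $P(X_1=x\mid H_1=h)=P(X_0=x\mid H_1=h)$, which is equivalent to $(X_0,H_1)\stackrel{d}{=}(X_1,H_1)$ and therefore says that the $H$-update leaves the full joint $\pi_{X,H}$ invariant. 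Because the $H$-update never touches $X^{(s)}$ and sees the state only through $(X,H)$, invariance of the full joint descends to invariance of $\rho$: conditioning the preserved joint on $X^{(s)}=x^{(s)}$ (whose marginal is itself left unchanged by the update) preserves the conditional.

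Finally, to upgrade ``an invariant distribution'' to ``the asymptotic distribution,'' I would invoke ergodicity of the clamped chain on the frozen slice $\{X:X^{(s)}=x^{(s)}\}$. This follows by the same reasoning as Corollary~\ref{cor:ergonew2} applied to the restricted state space: local positivity of $f$ and of the clamped reconstruction on small balls, together with connectivity of the slice through $V$, give irreducibility and aperiodicity, hence a unique stationary law. Uniqueness identifies that law with $\rho$, and marginalizing out $H$ yields $\pi(X^{(-s)}\mid X^{(s)}=x^{(s)})$, which completes the argument.
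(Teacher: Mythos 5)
Your proposal is correct, but it takes a genuinely different route from the paper's own proof. The paper argues in a single conditioning step: at convergence the unclamped chain produces samples of $\pi_n$; the subset of those samples satisfying $X^{(s)}=x^{(s)}$ is distributed as the conditional, and ``by definition of conditional distribution'' those same samples could equally have been produced by drawing $X_t$ from the clamped reconstruction $P(X_t \mid H_t, X_t^{(s)}=x^{(s)})$ --- it never writes down the clamped kernel, checks invariance, or discusses uniqueness. You instead treat the clamped dynamics as a Markov chain in its own right, exhibit $\rho=\pi_{X,H}(\cdot\mid X^{(s)}=x^{(s)})$ as an invariant law by verifying both half-updates --- correctly identifying that the $X$-half is routine normalization against the factorization $\pi_{X,H}(x,h)=g(x,h)\pi_H(h)$, and that the $H$-half rests exactly on the reconstruction-consistency hypothesis $(X_0,H_1)\stackrel{d}{=}(X_1,H_1)$ of Theorem~\ref{thm:noisy-reconstruction}, which makes the unclamped $H$-update preserve $\pi_{X,H}$ and hence, since that update never alters $X^{(s)}$, preserve the conditional --- and then invoke ergodicity of the clamped chain to identify its asymptotic distribution with $\rho$. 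What each approach buys: the paper's argument is shorter but leaves implicit precisely the step you verify (consecutive unclamped samples essentially never satisfy the clamp at successive times, so the ``could equally have been produced'' assertion is really an invariance claim about the clamped kernel), whereas your version makes the hidden hypotheses explicit --- notably that the clamped chain must itself be ergodic on the slice $\{X : X^{(s)}=x^{(s)}\}$, which requires connectivity of that slice and is not implied by the conditions of Corollary~\ref{cor:ergonew2} on $V$ alone, and that with a non-factorial reconstruction one must resample $X^{(-s)}$ from the properly conditioned $g$, a caveat the paper only states informally after its proof.
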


\begin{proof}
Without constraint, we know that at convergence, the chain
produces samples of $\pi_n$.  A subset of these
samples satisfies the condition $X=x^{(s)}$, and these constrained samples
could equally have been produced by sampling $X_t$ from
\begin{equation*}
P_{\theta_2}(X_t | \otherf_{\theta_1}(X_{t-1},Z_{t-1},H_{t-1}), X_t^{(s)}=X^{(s)}),
\end{equation*}
by definition of conditional
distribution.  Therefore, at convergence of the chain, we have that using the constrained
distribution $P(X_t | \otherf(X_{t-1},Z_{t-1},H_{t-1}), X_t^{(s)}=x^{(s)})$ produces a sample from
$\pi_n$ under the condition $X^{(s)}=x^{(s)}$.
\end{proof}

Practically, it means that we must choose an output (reconstruction)
distribution from which it is not only easy to sample from, but
also from which it is easy to sample a subset of the variables in the vector $X$ {\em conditioned on the rest 
being known}. In the experiments below, we
used a factorial distribution for the reconstruction, from which
it is trivial to sample conditionally a subset of the
input variables. In general (with non-factorial output distributions)
one must use the proper conditional for the theorem to apply, i.e.,
it is not sufficient to clamp the inputs, one must also sample the
reconstructions from the appropriate conditional distribution (conditioning
on the clamped values).

This method of dealing with missing inputs can be immediately applied
to structured outputs. If $X^{(s)}$ is viewed as an ``input''
and $X^{(-s)}$ as an ``output'', then sampling from 
$X^{(-s)}_{t+1} \sim P(X^{(-s)} | \otherf((X^{(s)},X^{(-s)}_t), Z_t,H_t), X^{(s)})$
will converge to estimators of ${\calP}(X^{(-s)}|X^{(s)})$. This still
requires good choices of the parametrization (for $\otherf$ as well as for
the conditional probability $P$), but the advantages of this approach
are that there is no approximate inference of latent variables and
the learner is trained with respect to simpler conditional 
probabilities: in the limit of small noise, we conjecture that these 
conditional probabilities can be well approximated by unimodal distributions.
Theoretical evidence comes from~\citet{Alain+Bengio-ICLR2013}:
{\em when the amount of corruption noise converges to 0 and the input variables
have a smooth continuous density, then a unimodal Gaussian reconstruction
density suffices to fully capture the joint distribution.}

\begin{figure*}[ht]
\centering
\begin{center}
\includegraphics[width=\textwidth]{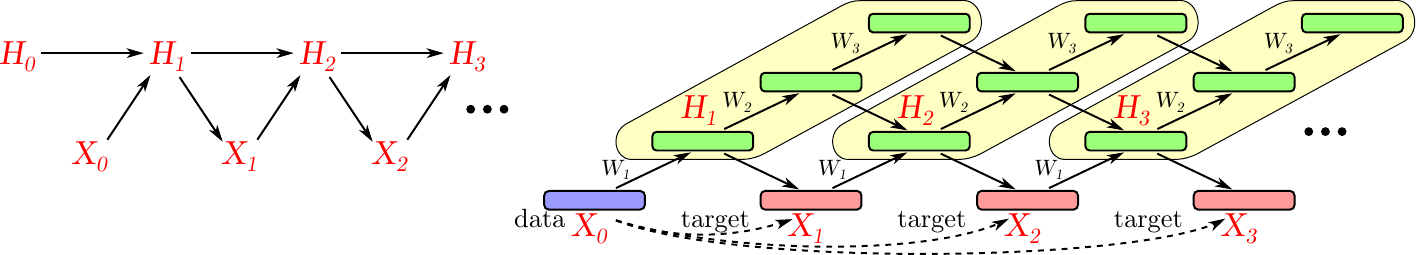}
\end{center}
\caption{{\em Left:} Generic GSN Markov chain with state variables
$X_t$ and $H_t$. {\em Right:} GSN Markov chain inspired by the
unfolded computational graph of the Deep Boltzmann
Machine Gibbs sampling process, but with backprop-able stochastic units
at each layer. The training example $X=x_0$ starts the chain. Either
odd or even layers are stochastically updated at each step, and all
downward weight matrices are fixed to the transpose of the corresponding
upward weight matrix. All $x_t$'s
are corrupted by salt-and-pepper noise before entering the graph.
Each $x_t$ for $t>0$ is obtained by sampling
from the reconstruction distribution for that step, $P_{\theta_2}(X_t|H_t)$. The walkback
training objective is the sum over all steps of log-likelihoods of target
$X=x_0$ under the reconstruction distribution.
In the special case of a unimodal Gaussian reconstruction
distribution, maximizing the likelihood is equivalent to minimizing
reconstruction error; in general one trains to maximum
likelihood, not simply minimum reconstruction error.}
\label{fig:comp-graph}
\vs{3}
\end{figure*}

\svs{2}
\subsection{Dependency Networks as GSNs}
\label{sec:dependency-nets}
\svs{2}

Dependency networks~ \mbox{\citep{HeckermanD2000}} are models in which one
estimates conditionals $P_i(x_i | x_{-i})$, where $x_{-i}$ denotes $x \setminus x_i$,
i.e., the set of variables other than the $i$-th one, $x_i$. Note that
each $P_i$ may be parametrized separately, thus not guaranteeing that
there exists a joint of which they are the conditionals. Instead of the
ordered pseudo-Gibbs sampler defined in~ \mbox{\citet{HeckermanD2000}}, which
resamples each variable $x_i$ in the order $x_1, x_2, \ldots$, we can view
dependency networks in the GSN framework by defining a proper Markov chain
in which at each step one randomly chooses which variable to resample. The
corruption process therefore just consists of $H=f(X,Z)=X_{-s}$ 
where $X_{-s}$ is the complement of $X_{s}$, with $s$
a randomly chosen subset of elements of $X$ (possibly constrained to be
of size 1).  Furthermore, we parametrize the reconstruction
distribution as $P_{\theta_2}(X=x|H) = \delta_{x_{-s}=X_{-s}}P_{\theta_2,s}(X_s=x_s | x_{-s})$ 
where the estimated conditionals
$P_{\theta_2,s}(X_s=x_s | x_{-s})$ are not constrained to be consistent
conditionals of some joint distribution over all of $X$.

\begin{proposition}
If the above GSN Markov chain has a stationary distribution, then
the dependency network defines a joint distribution (which is that
stationary distribution), which does not have to be known in closed
form. Furthermore, if the conditionals are consistent estimators
of the ground truth conditionals, then that stationary distribution 
is a consistent estimator of the ground truth joint distribution.
\vs{1}
\end{proposition}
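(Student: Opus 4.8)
The plan is to treat the two claims separately, since the first is essentially a matter of interpretation while the second is where the real work lies. For the first claim I would simply observe that the prescribed update — pick a random subset $s$, hold $X_{-s}=x_{-s}$ fixed, and resample $X_s \sim P_{\theta_2,s}(X_s \mid x_{-s})$ — defines a homogeneous Markov chain over the space of the full vector $X$. By hypothesis this chain has a stationary distribution $\pi$, and by definition $\pi$ is a probability distribution over all of $X$, i.e.\ a joint. Thus the dependency network, through its associated pseudo-Gibbs chain, \emph{defines} a joint distribution, namely $\pi$; this requires no closed-form expression, since $\pi$ is characterized implicitly as the fixed point of the transition operator rather than by an explicit formula. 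That disposes of the first sentence.

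For the second claim the key observation is that the random-scan sampler built from the \emph{true} conditionals ${\cal P}(X_s \mid X_{-s})$ of the data-generating joint ${\cal P}(X)$ leaves ${\cal P}(X)$ invariant. I would first verify that, for each fixed $s$, the kernel $K_s(x\to x')=\delta_{x'_{-s}=x_{-s}}\,{\cal P}(x'_s\mid x_{-s})$ satisfies detailed balance with respect to ${\cal P}$: the product ${\cal P}(x)\,K_s(x\to x')$ factors, on the constraint $x'_{-s}=x_{-s}$, as ${\cal P}(x_{-s})\,{\cal P}(x_s\mid x_{-s})\,{\cal P}(x'_s\mid x_{-s})$, which is symmetric under $x_s\leftrightarrow x'_s$. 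Hence each $K_s$ fixes ${\cal P}$, and so does any convex mixture over the random choice of $s$, so the true chain has ${\cal P}(X)$ as a stationary distribution.

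Next I would invoke the consistency hypothesis: as the number of training examples $n\to\infty$, each estimated conditional $P_{\theta_2,s}(X_s\mid x_{-s})$ converges to ${\cal P}(X_s\mid x_{-s})$, so the estimated transition operator $\tilde T_n$ converges (entrywise, in the discrete case) to the true Gibbs operator $T$ whose stationary distribution is ${\cal P}(X)$. To pass from closeness of operators to closeness of stationary distributions I would appeal to the perturbation bound of Theorem~\ref{thm:schweitzer_inequality}: taking $K=T$ and $\tilde K=\tilde T_n$ gives $\lVert {\cal P}-\pi_n\rVert_1 \le \lVert Z\rVert_\infty\,\lVert T-\tilde T_n\rVert_\infty \to 0$ with $Z=(I-T+C)^{-1}$, so the stationary distribution $\pi_n$ of the estimated chain converges to ${\cal P}(X)$, which is exactly the claimed consistency.

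The main obstacle, and the step demanding the most care, is this last one: the robustness of the stationary distribution under perturbation of the kernel. For finite (discrete) state spaces it is handled cleanly by Theorem~\ref{thm:schweitzer_inequality}, provided the chains are irreducible so that $Z$ exists and $\lVert Z\rVert_\infty$ stays bounded as $n$ grows. For continuous $X$ the finite-state bound no longer applies directly, and one would instead need a continuity result for stationary distributions of ergodic kernels — requiring uniform ergodicity or an analogous mixing condition to rule out a discontinuous jump of the stationary distribution in the small-perturbation limit. A secondary subtlety worth flagging is that the estimated conditionals need not be consistent conditionals of any single joint, so $\tilde T_n$ is genuinely a \emph{pseudo}-Gibbs operator and need not be reversible; this does not affect the argument, however, since the perturbation bound only asks that $\tilde T_n$ be a legitimate irreducible transition operator close to $T$, not that it inherit $T$'s detailed-balance structure.
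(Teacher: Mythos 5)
Your argument is sound, but it follows a genuinely different route from the paper's. The paper proves the proposition in one line: with the corruption $H=f(X,Z)=X_{-s}$ and reconstruction $P_{\theta_2}(X=x|H)=\delta_{x_{-s}=X_{-s}}P_{\theta_2,s}(X_s=x_s|x_{-s})$, the dependency-network sampler \emph{is} a generalized denoising-autoencoder chain, and since the true posterior of $X$ given the unmasked variables $X_{-s}$ is exactly the ground-truth conditional ${\cal P}(X_s|x_{-s})$ times the same delta, consistency of the estimated conditionals is literally consistency of the reconstruction distribution; Theorem~\ref{thm:consistency} (Theorem 1 of \citet{Bengio-et-al-NIPS2013}) then yields convergence of the stationary distribution to the ground-truth joint, with ergodicity secured by the random (rather than ordered) choice of which subset to resample. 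You instead rebuild the consistency claim from first principles: detailed balance shows that the random-scan operator assembled from the \emph{true} conditionals leaves ${\cal P}(X)$ invariant, and the Schweitzer perturbation bound (Theorem~\ref{thm:schweitzer_inequality}) converts convergence of the kernels into convergence of the stationary distributions. Both are valid. The paper's route is shorter and not tied to finite state spaces; yours is self-contained within results stated in this paper and gives the quantitative estimate $\left\Vert {\cal P}-\pi_n\right\Vert_1\leq\left\Vert Z\right\Vert_\infty\left\Vert T-\tilde{T}_n\right\Vert_\infty$, at the price of being rigorous only for discrete $X$ (as you flag) and of requiring irreducibility of both kernels. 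Two small refinements: first, $Z=(I-T+C)^{-1}$ is built from the fixed true kernel $T$, so $\left\Vert Z\right\Vert_\infty$ is a constant independent of $n$ --- there is no issue of it ``staying bounded as $n$ grows,'' only the existence of $Z$ (i.e., irreducibility of $T$) matters; second, your detailed-balance computation proves reversibility of the true kernel, which is stronger than needed, since invariance of ${\cal P}$ is all the argument uses.
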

The proposition can be proven by immediate application of Theorem 1
from ~\citet{Bengio-et-al-NIPS2013} 
with the above definitions of the GSN. This joint stationary
distribution can exist even if the conditionals are not consistent.
To show that, assume that some choice of (possibly inconsistent)
conditionals gives rise to a stationary distribution $\pi$.
Now let us consider the set of all conditionals (not necessarily
consistent) that could have given rise to that $\pi$.
Clearly, the conditionals derived from $\pi$ is part of that
set, but there are infinitely many others (a simple counting
argument shows that the fixed point equation of $\pi$ introduces
fewer constraints than the number of degrees of freedom that
define the conditionals). To better understand why the ordered
pseudo-Gibbs chain does not benefit from the same properties, we can consider an extended case
by adding an extra component of the state $X$, being
the index of the next variable to
resample. In that case, the Markov chain associated with the ordered pseudo-Gibbs procedure
would be periodic, thus violating the ergodicity assumption
of the theorem. However, by introducing randomness in the choice
of which variable(s) to resample next, we obtain aperiodicity
and ergodicity, yielding as stationary distribution
a mixture over all possible resampling orders. These results also
show in a novel way (see e.g. ~\citet{Hyvarinen-2006-small} for earlier results)
that training by pseudolikelihood or generalized pseudolikelihood
provides a consistent estimator of the associated joint, so long as the GSN
Markov chain defined above is ergodic. This result can
be applied to show that the multi-prediction deep Boltzmann machine (MP-DBM)
training procedure introduced by~\citet{Goodfellow-et-al-NIPS2013} also
corresponds to a GSN. This has been exploited in order to obtain
much better samples using the associated GSN Markov chain than
by sampling from the corresponding DBM~\citep{Goodfellow-et-al-NIPS2013}.
Another interesting conclusion that one can draw from this paper
and its GSN interpretation is that state-of-the-art classification
error can thereby be obtained: 0.91\% on MNIST without fine-tuning (best 
comparable previous DBM results was well above 1\%) and 10.6\% on permutation-invariant
NORB (best previous DBM results was 10.8\%). 

\svs{2}
\section{Experimental Example of GSN}
\label{sec:gsn_experiment}
\svs{2}

The theoretical results on Generative Stochastic Networks (GSNs)
open for exploration a large class of possible
parametrizations which will share the property that they can capture the
underlying data distribution through the GSN Markov chain. What
parametrizations will work well?  Where and how should one inject noise? We present
results of preliminary experiments with specific selections for each of these choices, but
the reader should keep in mind that the space of possibilities is vast.


As a conservative starting point, we propose to explore families of
parametrizations which are similar to existing deep stochastic architectures
such as the Deep Boltzmann Machine (DBM)~\citep{Salakhutdinov+Hinton-2009-small}.
Basically, the
idea is to construct a computational graph that is similar to the
computational graph for Gibbs sampling or variational inference
in Deep Boltzmann Machines. However, we have
to diverge a bit from these architectures in order to accommodate the
desirable property that it will be possible to back-propagate
the gradient of reconstruction log-likelihood with respect to the
parameters $\theta_1$ and $\theta_2$. Since the gradient of a binary
stochastic unit is 0 almost everywhere, we have to consider related
alternatives. An interesting source of inspiration regarding this
question is a recent paper on estimating or propagating gradients
through stochastic neurons~\citep{Bengio-arxiv2013}.
Here we consider the following stochastic non-linearities:
$h_i=\eta_{\rm out}+\tanh(\eta_{\rm in} + a_i)$
where $a_i$ is the linear activation for unit $i$ (an affine transformation
applied to the input of the unit, coming from the layer below, the layer
above, or both) and $\eta_{\rm in}$ and $\eta_{\rm out}$ are 
zero-mean Gaussian noises.

To emulate a sampling procedure similar to Boltzmann machines 
in which the filled-in missing values can depend on the
representations at the top level, the computational graph allows
information to propagate both upwards (from input to higher levels)
and downwards, giving rise to the
computational graph structure illustrated in Figure~\ref{fig:comp-graph},
which is similar to that explored for {\em deterministic} recurrent
auto-encoders~\citep{SeungS1998,Behnke-2001,Savard-master-small}. Downward
weight matrices have been fixed to the transpose of corresponding
upward weight matrices.

The {\em walkback} algorithm was proposed
in~\citet{Bengio-et-al-NIPS2013} to make training of generalized
denoising auto-encoders (a special case of the models studied here) more
efficient. The basic idea is that the reconstruction 
is obtained after not one but several steps of the sampling Markov chain. In this context
it simply means that the computational graph from $X$ to a reconstruction
probability actually involves generating intermediate samples as if we were
running the Markov chain starting at $X$. In the experiments, the
graph was unfolded so that $2 D$ sampled reconstructions would be produced,
where $D$ is the depth (number of hidden layers). The training loss is
the sum of the reconstruction negative log-likelihoods (of target $X$) 
over all those reconstruction steps.

Experiments evaluating the ability of the GSN models to generate good samples
were performed on the MNIST and TFD datasets, following the setup 
in~\citet{Bengio-et-al-ICML2013}.
Networks with 2 and 3 hidden layers
were evaluated and compared to regular denoising auto-encoders (just 1
hidden layer, i.e., the computational graph separates into separate ones
for each reconstruction step in the walkback algorithm). They all have tanh
hidden units and pre- and post-activation Gaussian noise of standard
deviation 2, applied to all hidden layers except the first.
In addition, at each step in the
chain, the input (or the resampled $X_t$) is corrupted with salt-and-pepper
noise of 40\% (i.e., 40\% of the pixels are corrupted, and replaced with a
0 or a 1 with probability 0.5). Training is over 100 to 600 epochs at most, with
good results obtained after around 100 epochs.  Hidden layer sizes vary
between 1000 and 1500 depending on the experiments, and a learning rate of
0.25 and momentum of 0.5 were selected to approximately minimize the
reconstruction negative log-likelihood. The learning rate is reduced
multiplicatively by $0.99$ after each epoch.  Following~\citet{Breuleux+Bengio-2011},
the quality of the samples
was also estimated quantitatively by measuring the log-likelihood
of the test set under a Parzen density estimator constructed from
10000 consecutively generated samples (using the real-valued mean-field reconstructions
as the training data for the Parzen density estimator). This can be
seen as an {\em lower bound on the true log-likelihood}, with the bound
converging to the true likelihood as we consider more samples and
appropriately set the smoothing parameter of the Parzen 
estimator\footnote{However, in this paper, to be consistent with
the numbers given in \citet{Bengio-et-al-ICML2013} we used a Gaussian
Parzen density, which makes the numbers not comparable with the
AIS log-likelihood upper bounds for binarized images reported in other papers
for the same data.}
Results are summarized in Table~\ref{tab:LL}. The test set Parzen log-likelihood bound
was not used to select among model architectures, but visual inspection of
samples generated did guide the preliminary search reported here.
Optimization hyper-parameters (learning rate, momentum, and
learning rate reduction schedule) were selected based on the 
reconstruction log-likelihood training objective. The Parzen log-likelihood bound obtained
with a two-layer model on MNIST is 214 ($\pm$ standard error of 1.1), while the log-likelihood
bound obtained by a single-layer model (regular denoising auto-encoder, DAE in
the table) is
substantially worse, at -152$\pm$2.2.
In comparison,~\citet{Bengio-et-al-ICML2013} report a log-likelihood bound of -244$\pm$54 for
RBMs and 138$\pm$2 for a 2-hidden layer DBN, using the same setup. We have
also evaluated a 3-hidden layer DBM~\citep{Salakhutdinov+Hinton-2009-small}, using
the weights provided by the author, and obtained a Parzen log-likelihood bound of 32$\pm$2.
See \texttt{http://www.mit.edu/{\textasciitilde}rsalakhu/DBM.html} for details.

\begin{figure}[htpb]
\vs{3}
\centering
\includegraphics[width=1.0\linewidth]{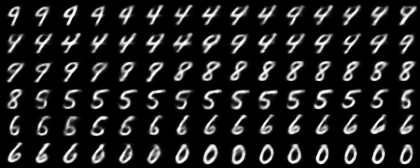} 

\vspace*{1mm}
\includegraphics[width=1.0\linewidth]{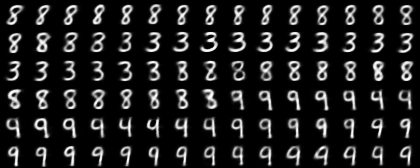}

\vspace*{1.5mm}
\includegraphics[width=1.0\linewidth]{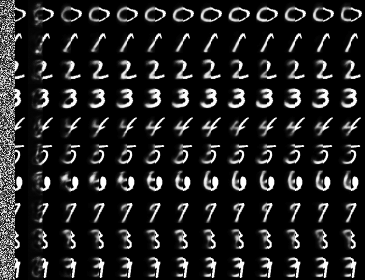}
\vs{5}
\caption{Top: two runs of consecutive samples (one row after the other) generated
from 2-layer GSN model,
showing fast mixing between classes, nice and sharp images. Note: only every fourth sample is shown; see the supplemental material for the samples in between.
Bottom: conditional Markov chain, with the right half of the image clamped to
one of the MNIST digit images and the left half successively resampled, illustrating
the power of the generative model to stochastically fill-in missing inputs. See also Figure~\ref{fig:samples+inpainting} for longer runs.}
\label{fig:samples+inpainting_small}
\vs{5}
\end{figure}

Interestingly, the GSN and the DBN-2 actually perform slightly better than
when using samples directly coming from the MNIST training set, maybe because
they generate more ``prototypical'' samples (we are using mean-field outputs).

Figure~\ref{fig:samples+inpainting_small} shows a single run of consecutive samples
from this trained model (see Figure~\ref{fig:samples+inpainting} for longer runs),
illustrating that it mixes quite well (better
than RBMs) and produces rather sharp digit images. The figure shows
that it can also stochastically complete missing values: the left half
of the image was initialized to random pixels and the right side was clamped
to an MNIST image. The Markov chain explores plausible variations of 
the completion according to the trained conditional distribution.

A smaller set of experiments was also run on TFD, yielding a test set
Parzen log-likelihood bound of 1890 $\pm 29$. The setup is exactly the same
and was not tuned after the MNIST experiments. A DBN-2 yields a 
Parzen log-likelihood bound of 1908 $\pm 66$, which is indistinguishable
statistically, while an RBM yields 604 $\pm$ 15. One out of every 2
consecutive samples from the GSN-3
model are shown in Figure~\ref{fig:tfd-samples_small} (see
Figure~\ref{fig:tfd-samples} for longer runs without skips).

\begin{figure}[htpb]
\vs{3}
\centering
\includegraphics[width=1.0\linewidth]{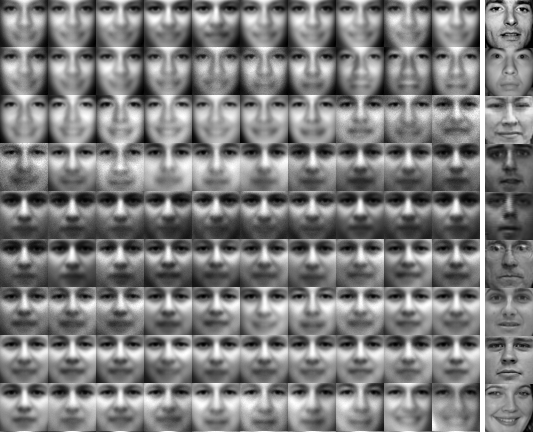}  \vspace*{2mm} %
\vs{8}
\caption{GSN samples from a 3-layer model trained on the TFD dataset. Every second sample is shown; see supplemental material for every sample. At the end of each row, we show the nearest example from the training set to the last sample on that row, to illustrate that the distribution is not merely copying the training set. See also Figure~\ref{fig:tfd-samples} for longer runs without skips.}
\label{fig:tfd-samples_small}
\vs{3}
\end{figure}

\begin{table*}[htpb]
\vs{3}
\caption{Test set log-likelihood lower bound (LL) obtained by a Parzen density estimator
constructed using 10000 generated samples, for different generative models trained
on MNIST.
The LL is not directly comparable to AIS likelihood estimates
because we use a Gaussian mixture rather
than a Bernoulli mixture to compute the likelihood, but we can compare with
\citet{Rifai-icml2012,Bengio-et-al-ICML2013,Bengio-et-al-NIPS2013} (from which we took the last three columns).
A DBN-2 has 2 hidden layers, a CAE-1 has 1 hidden layer, and a CAE-2 has 2.
The DAE is basically a GSN-1, with no injection of noise inside the network.
The last column uses 10000 MNIST training examples to train the Parzen density estimator.
}
\label{tab:LL}
\begin{center}
\begin{small}
\begin{sc}
\begin{tabular}{lrrrrrr} 
\toprule
& GSN-2 & DAE & DBN-2 & CAE-1 & CAE-2  & MNIST\\
\midrule
Log-likelihood lower bound  &  214 & 144 & 138   & 68  & 121  & 24 \\
Standard error              &  1.1 & 1.6 &  2.0  & 2.9 & 1.6  & 1.6\\
\bottomrule
\end{tabular}
\end{sc}
\end{small}
\end{center}
\svs{3}
\end{table*}

\section{Conclusion}
\svs{2}

We have introduced a new approach to training generative models, called
Generative Stochastic Networks (GSN), that is an alternative to maximum
likelihood, with the objective of avoiding the intractable marginalizations
and the danger of poor approximations of these marginalizations. The
training procedure is more similar to function approximation than to
unsupervised learning because the reconstruction distribution is simpler
than the data distribution, often unimodal (provably so in the limit of
very small noise).  This makes it possible to train unsupervised models that
capture the data-generating distribution simply using back-prop and
gradient descent (in a computational graph that includes noise injection).
The proposed theoretical results state that under mild conditions
(in particular that the noise injected in the networks prevents perfect reconstruction),
training the model to denoise and reconstruct
its observations (through a powerful family of reconstruction
distributions) suffices to capture the data-generating distribution through
a simple Markov chain. Another way to put it is that we are training the
transition operator of a Markov chain whose stationary distribution
estimates the data distribution, and it turns out that this is a much
easier learning problem because the normalization constant for this
conditional distribution is generally dominated by fewer modes. These
theoretical results are extended to the case where the corruption is local
but still allows the chain to mix and to the case where some inputs are
missing or constrained (thus allowing to sample from a conditional
distribution on a subset of the observed variables or to learned structured
output models). The GSN framework is shown to lend to dependency networks a
valid estimator of the joint distribution of the observed variables even
when the learned conditionals are not consistent, also allowing to prove
consistency of generalized pseudolikelihood training, associated with the
stationary distribution of the corresponding GSN (that randomly chooses a
subset of variables and then resamples it). Experiments have
been conducted to validate the theory, in the case where the GSN
architecture emulates the Gibbs sampling process of a Deep Boltzmann
Machine, on two datasets. A quantitative evaluation of the samples
confirms that the training procedure works very well (in this case
allowing us to train a deep generative model without layerwise pretraining)
and can be used to perform conditional sampling of a subset of
variables given the rest.

\subsubsection*{Acknowledgements}

The authors would like to acknowledge the stimulating discussions and
help from Vincent Dumoulin,
Pascal Vincent, Yao Li, Aaron Courville, Ian Goodfellow, and Hod Lipson,
as well as funding from NSERC, CIFAR (YB is a CIFAR Senior Fellow), NASA (JY is a Space Technology Research Fellow),
and the Canada Research Chairs.

\appendix

\section{Supplemental Experimental Results}

Experiments evaluating the ability of the GSN models to generate good samples
were performed on the MNIST and TFD datasets, following the setup 
in~Bengio et al. (2013c). 
Theorem 2 requires $H_0$ to have the same distribution as $H_1$ (given $X_0$) during training,
and the main paper suggests a way to achieve this by initializing each training chain with
$H_0$ set to the previous value of $H_1$ when the same example $X_0$ was shown. However, we
did not implement that procedure in the experiments below, so that is left for future work
to explore. 

Networks with 2 and 3 hidden layers
were evaluated and compared to regular denoising auto-encoders (just 1
hidden layer, i.e., the computational graph separates into separate ones
for each reconstruction step in the walkback algorithm). They all have $\tanh$
hidden units and pre- and post-activation Gaussian noise of standard
deviation 2, applied to all hidden layers except the first.
In addition, at each step in the
chain, the input (or the resampled $X_t$) is corrupted with salt-and-pepper
noise of 40\% (i.e., 40\% of the pixels are corrupted, and replaced with a
0 or a 1 with probability 0.5). Training is over 100 to 600 epochs at most, with
good results obtained after around 100 epochs, using stochastic gradient descent
(minibatch size = 1).  Hidden layer sizes vary
between 1000 and 1500 depending on the experiments, and a learning rate of
0.25 and momentum of 0.5 were selected to approximately minimize the
reconstruction negative log-likelihood. The learning rate is reduced
multiplicatively by $0.99$ after each epoch.  Following~Breuleux et al. (2011), 
the quality of the samples
was also estimated quantitatively by measuring the log-likelihood
of the test set under a Parzen density estimator constructed from
10000 consecutively generated samples (using the real-valued mean-field reconstructions
as the training data for the Parzen density estimator). This can be
seen as an {\em lower bound on the true log-likelihood}, with the bound
converging to the true likelihood as we consider more samples and
appropriately set the smoothing parameter of the Parzen 
estimator\footnote{However, in this paper, to be consistent with
the numbers given in Bengio et al. (2013c) 
we used a Gaussian
Parzen density, which (in addition to being lower rather than
upper bounds) makes the numbers not comparable with the
AIS log-likelihood upper bounds for binarized images reported in some papers
for the same data.}.
Results are summarized in Table~\ref{tab:LL}. The test set Parzen log-likelihood bound
was not used to select among model architectures, but visual inspection of
samples generated did guide the preliminary search reported here.
Optimization hyper-parameters (learning rate, momentum, and
learning rate reduction schedule) were selected based on the 
reconstruction log-likelihood training objective. The Parzen log-likelihood bound obtained
with a two-layer model on MNIST is 214 ($\pm$ standard error of 1.1), while the log-likelihood
bound obtained by a single-layer model (regular denoising auto-encoder, DAE in
the table) is
substantially worse, at -152$\pm$2.2.
In comparison,~Bengio et al. (2013c) 
report a log-likelihood bound of -244$\pm$54 for
RBMs and 138$\pm$2 for a 2-hidden layer DBN, using the same setup. We have
also evaluated a 3-hidden layer DBM~(Salakhutdinov \& Hinton, 2009), 
using the weights provided by the author, and obtained a Parzen log-likelihood bound of 32$\pm$2.
See \texttt{http://www.mit.edu/{\textasciitilde}rsalakhu/DBM.html} for details.
Figure~\ref{fig:samples+inpainting} shows two runs of consecutive samples
from this trained model,
illustrating that it mixes quite well (better
than RBMs) and produces rather sharp digit images. The figure shows
that it can also stochastically complete missing values: the left half
of the image was initialized to random pixels and the right side was clamped
to an MNIST image. The Markov chain explores plausible variations of 
the completion according to the trained conditional distribution.

A smaller set of experiments was also run on TFD, yielding for a GSN a test set
Parzen log-likelihood bound of 1890 $\pm 29$. The setup is exactly the same
and was not tuned after the MNIST experiments. A DBN-2 yields a 
Parzen log-likelihood bound of 1908 $\pm 66$, which is undistinguishable
statistically, while an RBM yields 604 $\pm$ 15. A run of
consecutive samples from the GSN-3
model are shown in Figure~\ref{fig:tfd-samples}.
Figure~\ref{fig:early-samples} shows consecutive samples obtained early on during
training, after only 5 and 25 epochs respectively, illustrating the fast
convergence of the training procedure.

\begin{figure*}[ht]
\centering
\includegraphics[width=\textwidth]{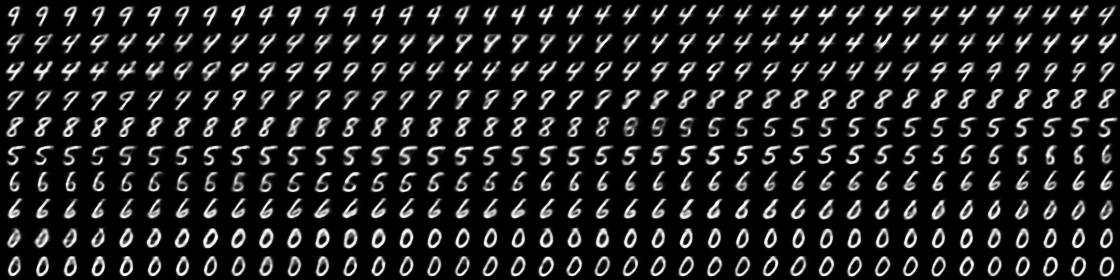} 

\includegraphics[width=\textwidth]{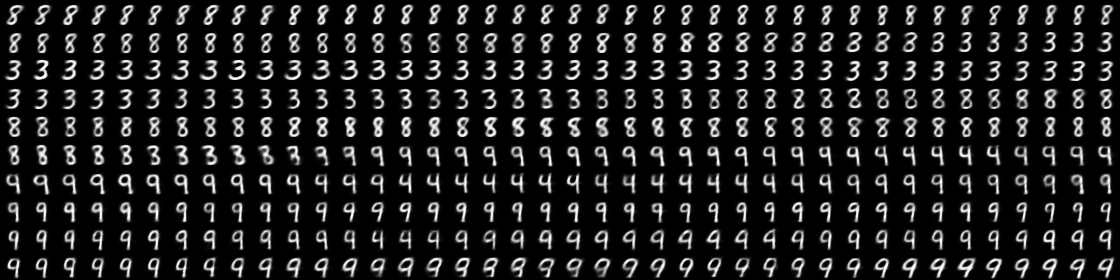}

\vspace*{1.5mm}
\includegraphics[width=\textwidth]{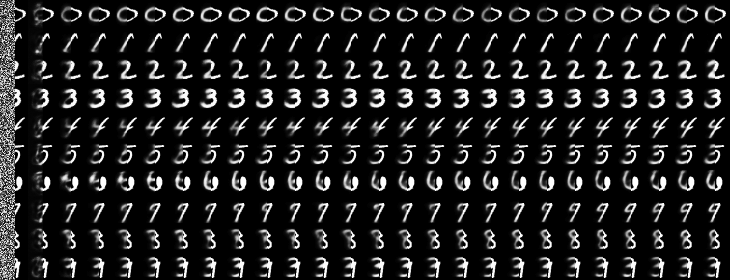}
\caption{These are expanded plots of those in Figure~\ref{fig:samples+inpainting_small}.
{\em Top:} two runs of consecutive samples (one row after the other) generated
from a 2-layer GSN model,
showing that it mixes well between classes and produces nice and sharp images. Figure~\ref{fig:samples+inpainting_small} contained only one in every four samples, whereas here we show every sample.
{\em Bottom:} conditional Markov chain, with the right half of the image clamped to
one of the MNIST digit images and the left half successively resampled, illustrating
the power of the trained generative model to stochastically fill-in missing inputs.
Figure~\ref{fig:samples+inpainting_small} showed only 13 samples in each chain; here we show 26.}
\label{fig:samples+inpainting}
\end{figure*}

\begin{figure*}[ht]
\centering
\includegraphics[width=0.49\textwidth]{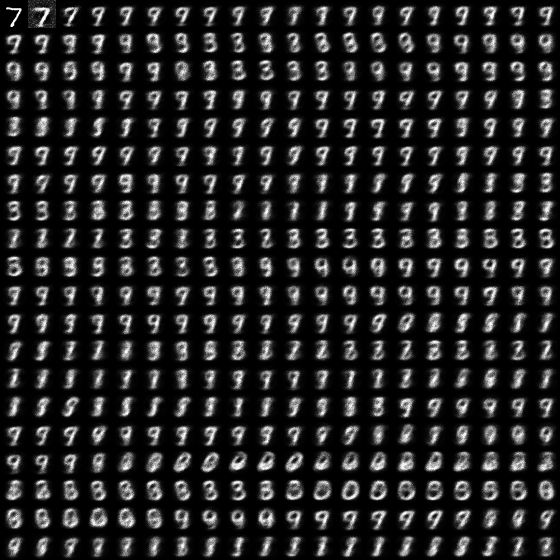}  \vspace*{2mm} %
\includegraphics[width=0.49\textwidth]{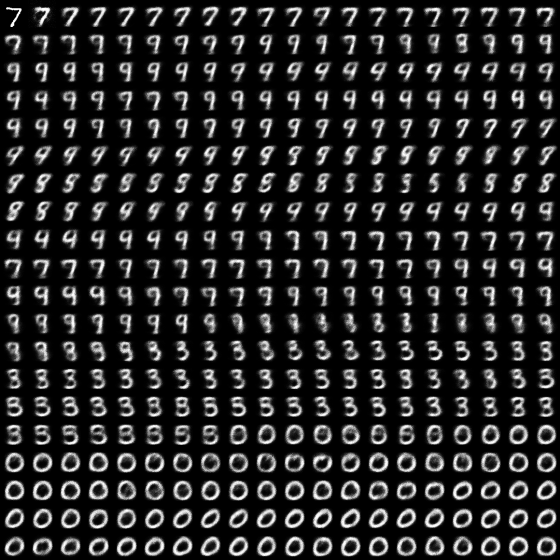}
\caption{Left: consecutive GSN samples obtained after 10 training epochs. Right: GSN
  samples obtained after 25 training epochs. This shows quick convergence to a model that samples well. The samples in
  Figure~\ref{fig:samples+inpainting} are obtained after 600 training
  epochs.}
\label{fig:early-samples}
\end{figure*}

\begin{figure*}[ht]
\centering
\includegraphics[width=1\textwidth]{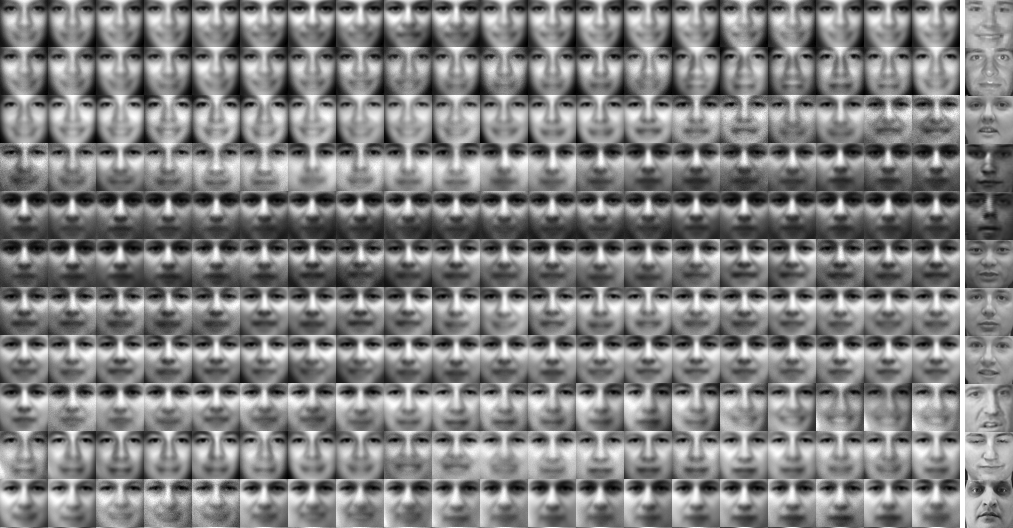}  \vspace*{2mm} %
\caption{Consecutive GSN samples from a 3-layer model trained on the TFD dataset.  At the end of each row, we show the nearest example from the training set to the last sample on that row to illustrate that the distribution is not merely copying the training set.}
\label{fig:tfd-samples}
\end{figure*}

\bibliography{strings,strings-shorter,ml,aigaion-shorter,cultrefs}
\bibliographystyle{icml2014}
\end{document}